\def\maxwidth{ %
  \ifdim\Gin@nat@width>\linewidth
    \linewidth
  \else
    \Gin@nat@width
  \fi
}
\definecolor{fgcolor}{rgb}{0.345, 0.345, 0.345}
\definecolor{shadecolor}{rgb}{.97, .97, .97}
\definecolor{messagecolor}{rgb}{0, 0, 0}
\definecolor{warningcolor}{rgb}{1, 0, 1}
\definecolor{errorcolor}{rgb}{1, 0, 0}
\newenvironment{knitrout}{}{} 
\newcommand{\app}[1]{Appendix~\ref{app:#1}}
\newcommand{\lem}[1]{Lemma~\ref{lem:#1}}
\newcommand{\mysec}[1]{Section~\ref{sec:#1}}
\newcommand{\eq}[1]{Eq.~(\ref{eq:#1})}
\newcommand{\eqw}[1]{Eq.~(#1)}
\newcommand{\fig}[1]{Fig.~(\ref{fig:#1})}
\newcommand{\npq}{\eta} 
\newcommand{\mpq}{m} 
\newcommand{\mpopt}{m^*} 
\newcommand{\npopt}{\eta^*} 
\newcommand{\gauss}{\mathcal{N}} 
\newcommand{\truecov}{\Sigma} 
\newcommand{\lrcov}{\hat{\Sigma}} 
\newcommand{\vbcov}{V} 
\newcommand{\constant}{C} 
\newcommand{\klshort}{E}
\theoremstyle{plain}
\newtheorem{theorem}{Theorem}[section]
\newtheorem{proposition}[theorem]{Proposition}
\newtheorem{lemma}[theorem]{Lemma}
\newcommand{\kl}{\textrm{KL}}
\DeclareMathOperator*{\argmin}{arg\,min}
\DeclareMathOperator*{\argmax}{arg\,max}
\newcommand{\mbe}{\mathbb{E}}
\newcommand{\mbeq}{\mathbb{E}_{q}}
\newcommand{\cov}{\textrm{Cov}}
\newcommand{\iid}{\stackrel{iid}{\sim}}
\newcommand{\indep}{\stackrel{indep}{\sim}}
\title{Linear Response Methods for Accurate Covariance Estimates from
       Mean Field Variational Bayes}
\author{
Ryan Giordano\\
UC Berkeley\\
\texttt{rgiordano@berkeley.edu}
\and
Tamara Broderick \\
MIT\\
\texttt{tbroderick@csail.mit.edu}
\and
Michael Jordan \\
UC Berkeley\\
\texttt{jordan@cs.berkeley.edu}
}
\begin{document}

\maketitle

\begin{abstract}

Mean field variational Bayes (MFVB) is a popular posterior approximation method
due to its fast runtime on large-scale data sets. However, a well known major
failing of MFVB is that it underestimates the uncertainty of model variables
(sometimes severely) and provides no information about model variable
covariance.
We generalize linear response methods from statistical physics
to deliver accurate uncertainty estimates for model
variables---both for individual variables and coherently across variables.
We call our method \emph{linear response variational Bayes} (LRVB).
When the MFVB posterior approximation is in the exponential family,
LRVB has a simple, analytic form, even for
non-conjugate models. Indeed, we make no assumptions about the form of the
true posterior.
We demonstrate the accuracy and scalability of
our method on a range of models for both simulated and real data.

\end{abstract}


\section{Introduction}\label{sec:intro}

With increasingly efficient data collection methods, scientists are interested
in quickly analyzing ever larger data sets. In particular, the promise of these
large data sets is not simply to fit old models but instead to learn more
nuanced patterns from data than has been possible in the past. In theory, the
Bayesian paradigm yields exactly these desiderata. Hierarchical modeling allows
practitioners to capture complex relationships between variables of interest.
Moreover, Bayesian analysis allows practitioners to quantify the uncertainty in
any model estimates---and to do so coherently across all of the model variables.

\emph{Mean field variational Bayes} (MFVB), a method for approximating
a Bayesian posterior distribution, has grown in
popularity due to its fast runtime on large-scale data sets
\citep{blei:2003:lda, blei:2006:dp, hoffman:2013:stochastic}.
But a well known major failing of MFVB is that it gives
underestimates of the uncertainty of model variables that can be arbitrarily
bad, even when approximating a simple multivariate Gaussian distribution
\citep{mackay:2003:information,bishop:2006:pattern,turner:2011:two}.
Also, MFVB provides no information about how
the uncertainties in different model variables interact
\citep{wang:2005:inadequacy, bishop:2006:pattern, rue:2009:approximate, turner:2011:two}.

By generalizing linear response methods from statistical physics
\citep{parisi:1988:statistical, opper:2003:variational, opper:2001:advancedmeanfield, tanaka:2000:information}
to exponential family variational posteriors, we develop a methodology that
augments MFVB to deliver accurate uncertainty estimates for model
variables---both for individual variables and coherently across variables. In
particular, as we elaborate in \mysec{lr}, when the approximating posterior in
MFVB is in the exponential family, MFVB defines a fixed-point equation in the
means of the approximating posterior, and our approach yields a covariance
estimate by perturbing this fixed point. We call our method \emph{linear
response variational Bayes} (LRVB).

We provide a simple, intuitive formula for calculating the linear response
correction by solving a linear system based on the MFVB solution
(\mysec{lr_subsection}). We show how the sparsity of this system for many common
statistical models may be exploited for scalable computation
(\mysec{scaling_formulas}). We demonstrate the wide applicability of LRVB by
working through a diverse set of models to show that the LRVB covariance
estimates are nearly identical to those produced by a Markov Chain Monte Carlo
(MCMC) sampler, even when MFVB variance is dramatically underestimated
(\mysec{experiments}). Finally, we focus in more depth on models for finite
mixtures of multivariate Gaussians (\mysec{normal_mixture_model}), which have
historically been a sticking point for MFVB covariance estimates
\citep{bishop:2006:pattern,turner:2011:two}. We show that LRVB can give accurate
covariance estimates orders of magnitude faster than MCMC
(\mysec{normal_mixture_model}). We demonstrate both theoretically and
empirically that, for this Gaussian mixture model, LRVB scales linearly in the
number of data points and approximately cubically in the dimension of the
parameter space (\mysec{gmm_scaling}).

\paragraph{Previous Work.}

Linear response methods originated in the statistical physics literature
\citep{opper:2001:advancedmeanfield, tanaka:2000:information,
kappen:1998:efficient, opper:2003:variational}. These methods have been applied
to find new learning algorithms for Boltzmann machines
\citep{kappen:1998:efficient}, covariance estimates for discrete factor graphs
\citep{welling:2004:linear}, and independent component analysis
\citep{hojen:2002:mean}. \citep{tanaka:1998:mean} states that linear response
methods could be applied to general exponential family models but works out
details only for Boltzmann machines. \citep{opper:2003:variational}, which is
closest in spirit to the present work, derives general linear response
corrections to variational approximations; indeed, the authors go further to
formulate linear response as the first term in a functional Taylor expansion to
calculate full pairwise joint marginals. However, it may not be obvious to the
practitioner how to apply the general formulas of \citep{opper:2003:variational}.
Our contributions in the present work are (1) the provision of concrete,
straightforward formulas for covariance correction that are fast and easy to
compute, (2) demonstrations of the success of our method on a wide range of new
models, and (3) an
\href{https://github.com/rgiordan/LinearResponseVariationalBayesNIPS2015}{accompanying suite of code}.

\section{Linear response covariance estimation} \label{sec:lr}

\subsection{Variational Inference}

Suppose we observe $N$ data points, denoted by the $N$-long column vector $x$,
and denote our unobserved model parameters by $\theta$. Here, $\theta$ is a
column vector residing in some space $\Theta$; it has $J$ subgroups and total
dimension $D$. Our model is specified by a distribution of the observed data
given the model parameters---the likelihood $p(x | \theta)$---and a prior
distributional belief on the model parameters $p(\theta)$. Bayes' Theorem yields
the posterior $p(\theta | x)$.

Mean-field variational Bayes (MFVB) approximates $p(\theta | x)$ by a factorized
distribution of the form $q(\theta) = \prod_{j=1}^{J} q(\theta_j)$. $q$ is
chosen so that the Kullback-Liebler divergence $\kl(q || p)$ between $q$ and $p$
is minimized. Equivalently, $q$ is chosen so that $\klshort := L + S$, for $L :=
\mbe_q[\log p (\theta | x)]$ (the expected log posterior) and $S := -
\mbe_q[\log q(\theta)]$ (the entropy of the variational distribution),
is maximized:
\begin{align} \label{eq:kl}
  q^{*} &:= \argmin_{q} \kl(q || p) = \argmin_{q} \mbe_{q} \left[ \log q(\theta) - \log p(\theta | x)  \right] = \argmax_{q} E.
\end{align}
Up to a constant in $\theta$, the objective $\klshort$ is sometimes called the
``evidence lower bound'', or the ELBO \citep{bishop:2006:pattern}. In what
follows, we further assume that our variational distribution,
$q\left(\theta\right)$, is in the exponential family with natural parameter
$\npq$ and log partition function $A$:
$
\log q\left(\theta \vert \npq \right) =
  \npq^{T}\theta - A\left(\npq\right)
$
(expressed with respect to some base measure in $\theta$). We assume that
$p\left(\theta \vert x\right)$ is expressed with respect to the same base
measure in $\theta$ as for $q$. Below, we will make only mild regularity
assumptions about the true posterior $p(\theta | x)$ and no assumptions about
its form.

If we assume additionally that the parameters $\npopt$ at the optimum
$q^*(\theta) = q(\theta | \npopt)$ are in the interior of the feasible space,
then $q(\theta | \npq)$ may instead be described by the mean parameterization:
$\mpq := \mbe_{q} \theta$ with $\mpopt := \mbe_{q^*} \theta$. Thus, the
objective $\klshort$ can be expressed as a function of $m$, and the first-order
condition for the optimality of $q^*$ becomes the fixed point equation
\begin{equation}
  \label{eq:fixed_pt}
  \left. \frac{\partial \klshort}{\partial \mpq} \right|_{\mpq = \mpopt} = 0
  \;
  \Leftrightarrow
  \;
  \left. \left( \frac{\partial \klshort}{\partial \mpq} + \mpq \right) \right|_{\mpq = \mpopt} = \mpopt
  \;
  \Leftrightarrow
  \;
  M(\mpopt) = \mpopt
  \textrm{ for } M(\mpq) := \frac{\partial \klshort}{\partial \mpq} + \mpq.
\end{equation}

\subsection{Linear Response}\label{sec:lr_subsection}

Let $\vbcov$ denote the covariance matrix of $\theta$ under
the variational distribution $q^{*}(\theta)$, and let $\truecov$ denote the
covariance matrix of $\theta$ under the true posterior,
$p(\theta | x)$:
$$
\vbcov := \cov_{q^{*}} \theta,
\quad \quad
\truecov := \cov_{p} \theta.
$$
In MFVB, $\vbcov$ may be a poor estimator of $\truecov$, even when $\mpopt
\approx \mbe_{p} \theta$, i.e., when the marginal estimated means match well
\citep{wang:2005:inadequacy, bishop:2006:pattern, turner:2011:two}. Our goal is
to use the MFVB solution and linear response methods to construct an improved
estimator for $\truecov$.  We will focus on the covariance of the natural
sufficient statistic $\theta$, though the covariance of functions of $\theta$
can be estimated similarly (see \app{function_covariance}).

The essential idea of linear response is to perturb the first-order condition
$M(\mpopt) = \mpopt$ around its optimum. In particular, define the distribution
$p_{t}\left(\theta\vert x\right)$ as a log-linear perturbation of the posterior:
\begin{eqnarray} \label{eq:perturbed_dens}
\log p_{t}\left(\theta\vert x \right) & := &
    \log p\left(\theta\vert x \right) + t^{T}\theta - \constant\left( t\right),
\end{eqnarray}
where $\constant\left( t\right)$ is a constant in $\theta$. We assume that $p_t
(\theta \vert x)$ is a well-defined distribution for any $t$ in an open ball
around 0. Since $\constant\left( t\right)$ normalizes $p_t(\theta \vert x)$, it
is in fact the cumulant-generating function of $p(\theta \vert x)$, so the
derivatives of $\constant\left( t\right)$ evaluated at $t=0$ give the cumulants
of $\theta$. To see why this perturbation may be useful, recall that the second
cumulant of a distribution is the covariance matrix, our desired estimand:
$$
  \truecov = \cov_{p}(\theta) = \left. \frac{d}{dt^T dt} C(t) \right|_{t=0} = \left. \frac{d}{dt^T} \mbe_{p_t} \theta \right|_{t=0}.
$$

The practical success of MFVB relies on the fact that its estimates of the mean
are often good in practice. So we assume that $\mpopt_t \approx \mbe_{p_t}
\theta$, where $\mpopt_t$ is the mean parameter characterizing $q_t^*$ and
$q_t^*$ is the MFVB approximation to $p_t$. (We examine this assumption further
in \mysec{experiments}.) Taking derivatives with respect to $t$ on both sides of
this mean approximation and setting $t=0$ yields
\begin{equation}\label{eq:lrvb_derivative_defn}
  \truecov = \cov_{p}(\theta) \approx \left. \frac{d\mpq^*_t}{dt^T} \right|_{t=0} =: \lrcov,
\end{equation}
where we call $\lrcov$ the \emph{linear response variational Bayes} (LRVB)
estimate of the posterior covariance of $\theta$.

We next show that there exists a simple formula for $\lrcov$.
Recalling the form of the KL divergence (see~\eq{kl}), we have that
$-\kl(q || p_t) = E + t^{T} m =: E_t$. Then by \eq{fixed_pt}, we have
$\mpopt_t = M_t(\mpopt_t)$ for $M_t(\mpq) := M(\mpq) + t$. It follows from
the chain rule that
\begin{equation}\label{eq:dM_dt}
  \frac{d\mpq^*_t}{dt}
    = \left. \frac{\partial M_t}{\partial \mpq^T} \right|_{\mpq = \mpopt_t}
      \frac{d\mpopt_t}{dt} + \frac{\partial M_t}{\partial t}
    = \left. \frac{\partial M_t}{\partial \mpq^T} \right|_{\mpq = \mpopt_t}
      \frac{d\mpopt_t}{dt} + I,
\end{equation}
where $I$ is the identity matrix.  If we assume that we are at a strict local
optimum and so can invert the Hessian of $E$, then evaluating at $t=0$ yields
\begin{equation}
  \label{eq:gen_lrcov}
  \lrcov = \left. \frac{d\mpq^*_t}{dt^T} \right|_{t=0} = \frac{\partial M}{\partial \mpq} \lrcov + I
    = \left(\frac{\partial^2 \klshort}{\partial \mpq \partial \mpq^T} + I \right) \lrcov + I
    \quad
    \Rightarrow
    \quad
    \lrcov = -\left(\frac{\partial^2 \klshort}{\partial \mpq \partial \mpq^T} \right)^{-1},
\end{equation}
where we have used the form for $M$ in \eq{fixed_pt}. So the LRVB estimator
$\lrcov$ is the negative inverse Hessian of the optimization objective, $E$, as
a function of the mean parameters. It follows from \eq{gen_lrcov} that $\lrcov$
is both symmetric and positive definite when the variational distribution
$q^{*}$ is at least a local maximum of $E$.

We can further simplify \eq{gen_lrcov} by using the exponential family form of
the variational approximating distribution $q$. For $q$ in exponential family
form as above, the negative entropy $-S$ is dual to the log partition function
$A$ \citep{wainwright2008graphical}, so $S = -\npq^T \mpq + A(\npq)$; hence,
$$
  \frac{dS}{dm}
    = \frac{\partial S}{\partial \npq^T} \frac{d\npq}{d\mpq} + \frac{\partial S}{\partial \mpq}
    = \left(\frac{\partial A}{\partial \npq} - \mpq \right) \frac{d\npq}{d\mpq} - \npq(\mpq)
    = - \npq(\mpq).
$$
Recall that for exponential families, $\partial \npq(\mpq) / \partial \mpq =
V^{-1}$. So \eq{gen_lrcov} becomes\footnote{For a comparison of this formula
with the frequentist ``supplemented expectation-maximization'' procedure see
\app{SEM}.}
\begin{align}
  \nonumber
  \lrcov = -\left(\frac{\partial^2 L}{\partial m \partial m^T} + \frac{\partial^2 S}{\partial m \partial m^T}\right)^{-1}
    &= -(H - V^{-1})^{-1}, \textrm{ for } H := \frac{\partial^2 L}{\partial m \partial m^T}.  \Rightarrow\\
  \label{eq:spec_lrvb}
  \lrcov &= (I - VH)^{-1} V.
\end{align}

When the true posterior $p(\theta | x)$ is in the exponential family and
contains no products of the variational moment parameters, then $H=0$ and
$\lrcov=\vbcov$. In this case, the mean field assumption is correct, and the
LRVB and MFVB covariances coincide at the true posterior covariance.
Furthermore, even when the variational assumptions fail, as long as certain mean
parameters are estimated exactly, then this formula is also exact for
covariances. E.g., notably, MFVB is well-known to provide arbitrarily bad
estimates of the covariance of a multivariate normal posterior
\citep{mackay:2003:information,wang:2005:inadequacy,bishop:2006:pattern,turner:2011:two},
but since MFVB estimates the means exactly, LRVB estimates the covariance exactly
(see~\app{mvn_exact}).

\subsection{Scaling the matrix inverse} \label{sec:scaling_formulas}

\eq{spec_lrvb} requires the inverse of a matrix as large
as the parameter dimension of the posterior $p(\theta | x)$,
which may be computationally prohibitive.
Suppose we are interested in the covariance of parameter sub-vector $\alpha$,
and let $z$ denote the remaining parameters: $\theta = \left( \alpha, z \right)^{T}$.
We can partition
$
\truecov = \left( \truecov_{\alpha}, \truecov_{\alpha z}; \truecov_{z\alpha}, \truecov_{z} \right).
$
Similar partitions exist for $\vbcov$ and $H$.
If we assume a mean-field factorization $q(\alpha,z) = q(\alpha)q(z)$, then
$\vbcov_{\alpha z} = 0$. (The variational
distributions may factor further as well.)
We calculate the Schur complement of $\lrcov$ in \eq{spec_lrvb}
with respect to its $z$th
component to find that 
\begin{equation} \label{eq:nuisance_lrvb_est}
\hat{\truecov}_{\alpha} =
 ( I_{\alpha} - V_{\alpha}H_{\alpha} -
  V_{\alpha}H_{\alpha z} \left(I_{z} - V_{z}H_{z})^{-1}
  V_{z}H_{z\alpha}\right)^{-1} V_{\alpha}.
\end{equation}
Here, $I_\alpha$ and $I_z$ refer to $\alpha$- and $z$-sized identity
matrices, respectively.  In cases where
$\left(I_{z} - V_{z}H_{z}\right)^{-1}$
can be efficiently calculated (e.g., all the experiments
in \mysec{experiments}; see \fig{sparsity_patterns} in \app{np_details}),
\eq{nuisance_lrvb_est}
requires only an $\alpha$-sized inverse.

\section{Experiments} \label{sec:experiments}

We compare the covariance estimates from LRVB and MFVB in a range of models,
including models both with and without conjugacy
\footnote{All the code is available on our Github repository,
\href{https://github.com/rgiordan/LinearResponseVariationalBayesNIPS2015}{\texttt{rgiordan/LinearResponseVariationalBayesNIPS2015}},
}.
We demonstrate the superiority
of the LRVB estimate to MFVB in all models before focusing in on Gaussian
mixture models for a more detailed scalability analysis.

For each model, we simulate datasets with a range of parameters.  In the graphs,
each point represents the outcome from a single simulation.  The horizontal axis
is always the result from an MCMC procedure, which we take as the ground truth.
As discussed in \mysec{lr_subsection}, the accuracy of the LRVB covariance for a
sufficient statistic depends on the approximation $\mpopt_t \approx \mbe_{p_t}
\theta$. In the models to follow, we focus on regimes of moderate dependence
where this is a reasonable assumption for most of the parameters (see
\mysec{re_simulation} for an exception). Except where explicitly mentioned, the
MFVB means of the parameters of interest coincided well with the MCMC means, so
our key assumption in the LRVB derivations of \mysec{lr} appears to hold.

\subsection{Normal-Poisson model} \label{sec:normal_poisson_model}

\paragraph{Model.}
First consider a Poisson generalized linear mixed model, exhibiting non-conjugacy.
We observe Poisson draws $y_n$ and a design vector $x_n$, for $n=1,...,N$.
Implicitly below, we will everywhere condition on the $x_n$, which
we consider to be a fixed design matrix.
The generative model is:
\begin{align}
  z_n \vert \beta, \tau \indep \gauss\left(z_n \vert \beta x_n, \tau^{-1}\right), &
  \quad
  y_n \vert z_n \indep \textrm{Poisson}\left(y_n \vert \exp(z_n)\right), \label{eq:pn_model}\\
  \beta \sim \gauss( \beta \vert 0, \sigma^2_\beta), &
  \quad
  \tau \sim \Gamma( \tau \vert \alpha_\tau, \beta_\tau).
  \nonumber
\end{align}
For MFVB, we factorize $q\left(\beta,\tau,z\right) =
q\left(\beta\right)q\left(\tau\right)\prod_{n=1}^{N}q\left(z_{n}\right)$.
Inspection reveals that the optimal $q\left(\beta\right)$ will be Gaussian, and
the optimal $q\left(\tau\right)$ will be gamma (see \app{np_details}). Since the
optimal $q\left(z_n\right)$ does not take a standard exponential family form, we
restrict further to Gaussian $q\left(z_{n}\right)$. There are product terms in
$L$ (for example, the term
$\mbeq\left[\tau\right]\mbeq\left[\beta\right]\mbeq\left[z_{n}\right]$), so
$H\ne0$, and the mean field approximation does not hold; we expect LRVB to
improve on the MFVB covariance estimate.  A detailed description of how to
calculate the LRVB estimate can be found in \app{np_details}.

\paragraph{Results.}

\newcommand{\pnn}{500}
\newcommand{\pnnsims}{100}
\newcommand{\pnmupriorvar}{10}
\newcommand{\pntaualpha}{1}
\newcommand{\pntaubeta}{1}
\newcommand{\pnmcmciters}{20000}

We simulated $\pnnsims$ datasets, each with $\pnn$ data points and a randomly
chosen value for $\mu$ and $\tau$.  We drew the design matrix $x$ from a normal
distribution and held it fixed throughout.  We set prior hyperparameters
$\sigma_\beta^{2} = \pnmupriorvar$, $\alpha_\tau = \pntaualpha$, and $\beta_\tau =
\pntaubeta$. To get the ``ground truth'' covariance matrix, we took
$\pnmcmciters$ draws from the posterior with the R \texttt{MCMCglmm} package
\citep{rpackage:MCMCglmm}, which used a combination of Gibbs and Metropolis
Hastings sampling.  Our LRVB estimates used the autodifferentiation software
\texttt{JuMP} \citep{JuMP:LubinDunningIJOC}.

Results are shown in \fig{PN_SimulationResults}. Since $\tau$ is high in many of
the simulations, $z$ and $\beta$ are correlated, and MFVB underestimates the
standard deviation of $\beta$ and $\tau$. LRVB matches the MCMC standard
deviation for all $\beta$, and matches for $\tau$ in all but the most correlated
simulations. When $\tau$ gets very high, the MFVB assumption starts to bias the
point estimates of $\tau$, and the LRVB standard deviations start to differ from
MCMC.  Even in that case, however, the LRVB standard deviations are much more
accurate than the MFVB estimates, which underestimate the uncertainty
dramatically.  The final plot shows that LRVB estimates the covariances of $z$
with $\beta$, $\tau$, and $\log \tau$ reasonably well, while MFVB considers them
independent.

\begin{knitrout}
\definecolor{shadecolor}{rgb}{0.969, 0.969, 0.969}\color{fgcolor}\begin{figure}[ht!]

{\centering \includegraphics[width=0.17\linewidth,height=0.17\linewidth]{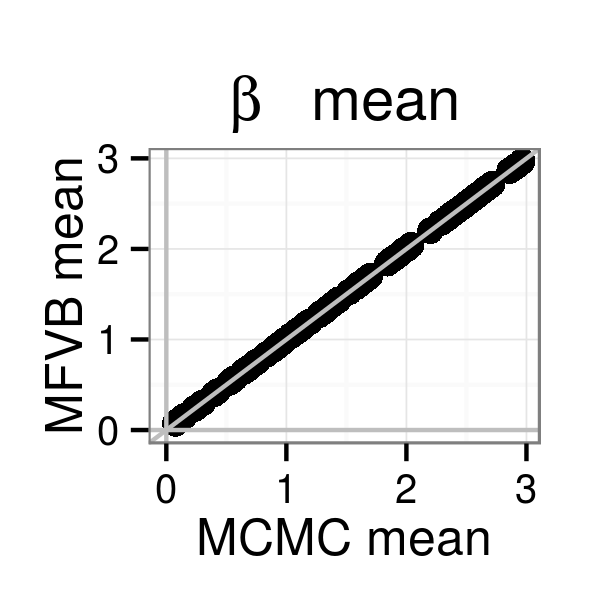}
\includegraphics[width=0.17\linewidth,height=0.17\linewidth]{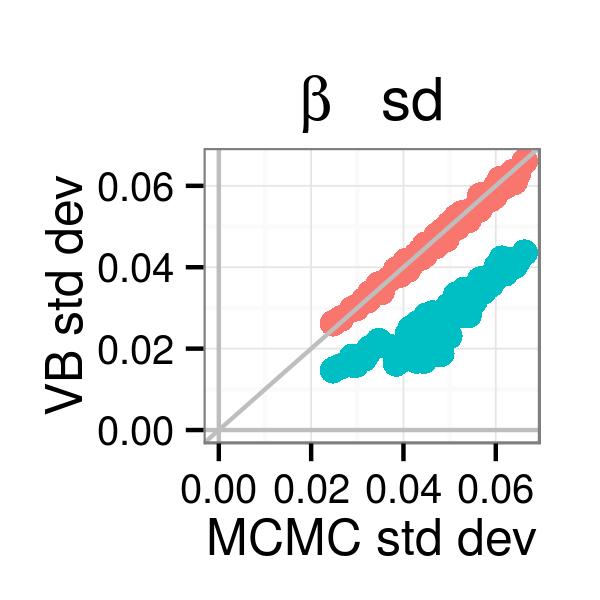}
\includegraphics[width=0.17\linewidth,height=0.17\linewidth]{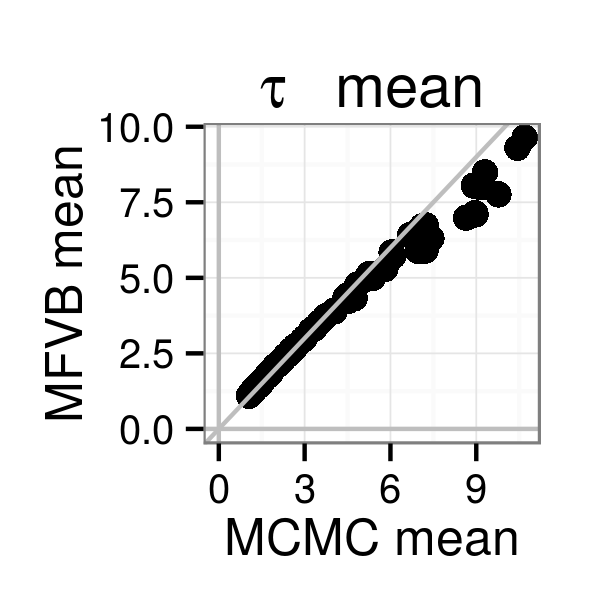}
\includegraphics[width=0.17\linewidth,height=0.17\linewidth]{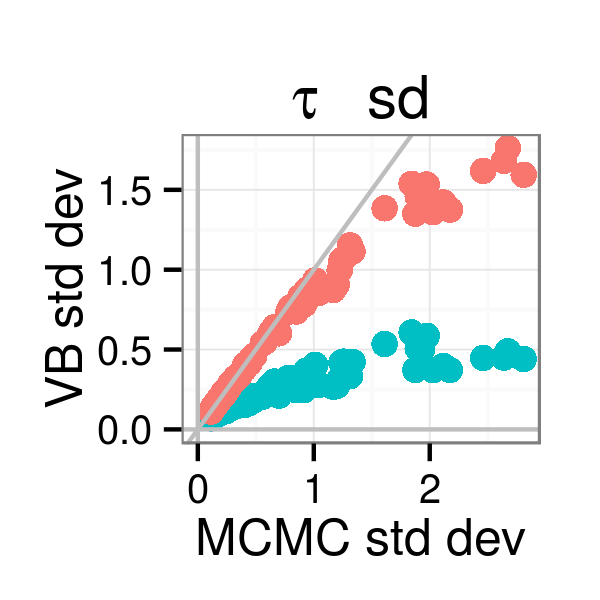}
\includegraphics[width=0.17\linewidth,height=0.17\linewidth]{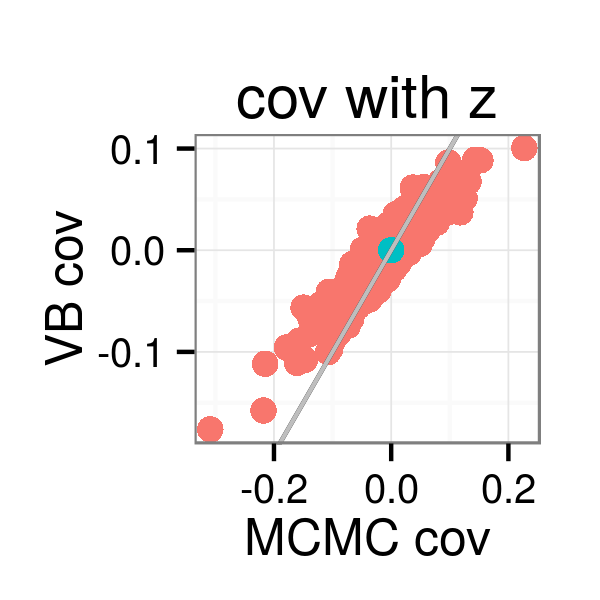}
\includegraphics[width=0.05\linewidth,height=0.17\linewidth]{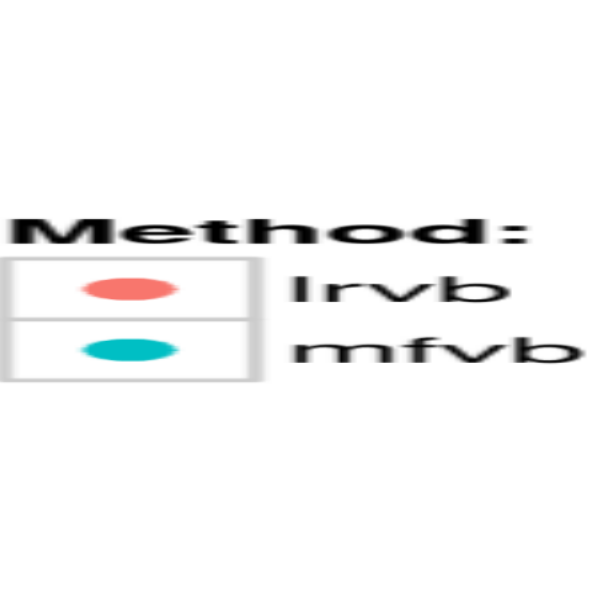}

}

\caption[Posterior mean and covariance estimates on normal-Poisson simulation data]{Posterior mean and covariance estimates on normal-Poisson simulation data.}\label{fig:PN_SimulationResults}
\end{figure}

\end{knitrout}

\subsection{Linear random effects} \label{sec:random_effects_model}

\newcommand{\ren}{300}
\newcommand{\rensims}{100}
\newcommand{\rerenum}{30}
\newcommand{\reNuPriorAlpha}{2}
\newcommand{\reNuPriorGamma}{2}
\newcommand{\reTauPriorAlpha}{2}
\newcommand{\reTauPriorGamma}{2}
\newcommand{\reBetaPriorInfo}{0.1}
\newcommand{\reZSD}{0.4}
\newcommand{\remcmciters}{}

\paragraph{Model.}

Next, we consider a simple random slope linear model, with full details in
\app{re_details}. We observe scalars $y_n$ and $r_n$ and a vector $x_n$, for
$n=1,...,N$. Implicitly below, we will everywhere condition on all the $x_n$ and
$r_n$, which we consider to be fixed design matrices. In general, each random
effect may appear in multiple observations, and the index $k(n)$ indicates which
random effect, $z_k$, affects which observation, $y_n$.  The full generative
model is:
\begin{align*} y_n \vert \beta, z, \tau \indep \gauss\left(y_n \vert
\beta^T x_n + r_n z_{k(n)}, \tau^{-1}\right), &\quad z_k \vert \nu \iid
\gauss\left(z_k \vert 0, \nu^{-1}\right), \\ \beta \sim \gauss(\beta \vert 0,
\Sigma_\beta), \quad \nu \sim \Gamma(\nu \vert \alpha_\nu, \beta_\nu), &\quad
\tau \sim \Gamma(\tau \vert \alpha_\tau, \beta_\tau). \end{align*}
We assume the mean-field factorization $q\left(\beta,\nu,\tau,z\right) =
q\left(\beta\right) q\left(\tau\right) q\left(\nu\right) \prod_{k=1}^{K}
q\left(z_n\right)$.  Since this is a conjugate model, the optimal $q$ will be in
the exponential family with no additional assumptions.

\paragraph{Results.}\label{sec:re_simulation}

We simulated $\rensims$ datasets of $\ren$ datapoints each and $\rerenum$
distinct random effects.  We set prior hyperparameters to $\alpha_\nu =
\reNuPriorAlpha$, $\beta_\nu = \reNuPriorGamma$, $\alpha_\tau =
\reTauPriorAlpha$ , $\beta_\tau = \reTauPriorGamma$, and $\Sigma_\beta =
\reBetaPriorInfo ^ {-1} I$.  Our $x_n$ was 2-dimensional.
As in \mysec{normal_poisson_model},
we implemented the variational solution using the autodifferentiation
software \texttt{JuMP} \citep{JuMP:LubinDunningIJOC}. The MCMC fit was
performed with $\remcmciters$ using \texttt{MCMCglmm} \citep{rpackage:MCMCglmm}.

Intuitively, when the random effect explanatory variables $r_n$ are highly
correlated with the fixed effects $x_n$, then the posteriors for $z$ and $\beta$
will also be correlated, leading to a violation of the mean field assumption and
an underestimated MFVB covariance.  In our simulation, we used $r_n = x_{1n} +
\gauss(0, \reZSD)$, so that $r_n$ is correlated with $x_{1n}$ but not $x_{2n}$.
The result, as seen in \fig{RE_SimulationResults}, is that $\beta_1$ is
underestimated by MFVB, but $\beta_2$ is not. The $\nu$ parameter, in contrast,
is not well-estimated by the MFVB approximation in many of the simulations.
Since the LRVB depends on the approximation $\mpopt_t \approx \mbe_{p_t}
\theta$, its LRVB covariance is not accurate either
(\fig{RE_SimulationResults}). However, LRVB still improves on the MFVB standard
deviation.
\begin{knitrout}
\definecolor{shadecolor}{rgb}{0.969, 0.969, 0.969}\color{fgcolor}\begin{figure}[ht!]

{\centering \includegraphics[width=0.19\linewidth,height=0.19\linewidth]{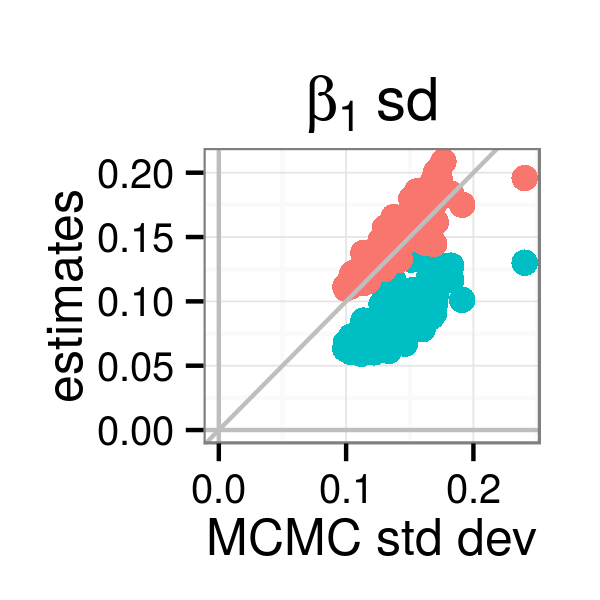}
\includegraphics[width=0.19\linewidth,height=0.19\linewidth]{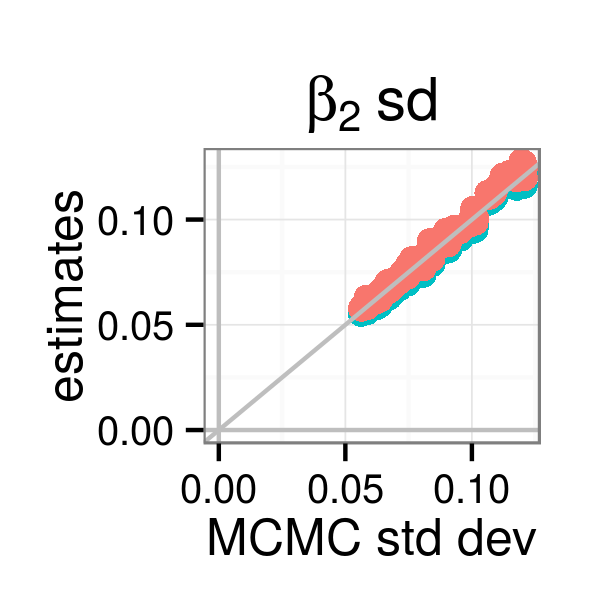}
\includegraphics[width=0.19\linewidth,height=0.19\linewidth]{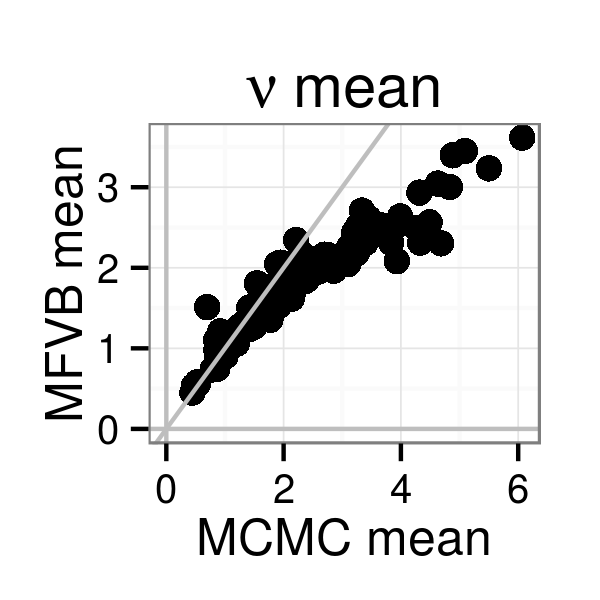}
\includegraphics[width=0.19\linewidth,height=0.19\linewidth]{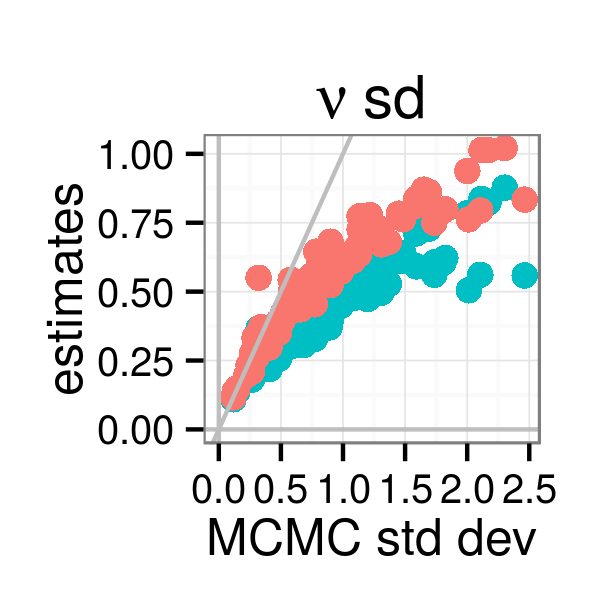}
\includegraphics[width=0.19\linewidth,height=0.19\linewidth]{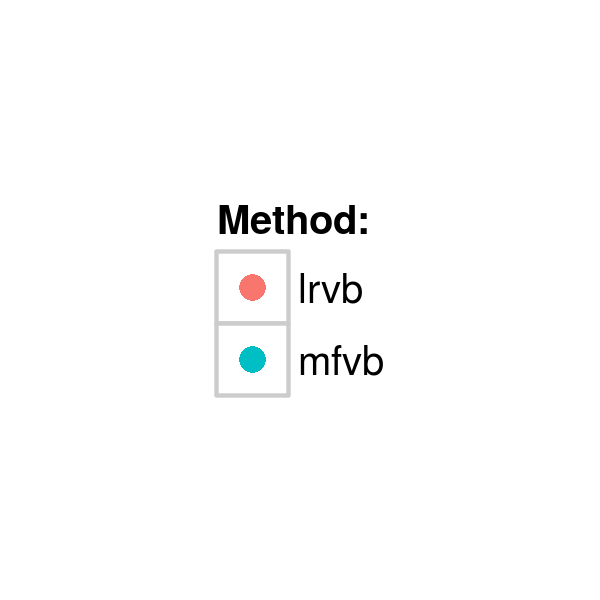}

}

\caption[Posterior mean and covariance estimates on linear random effects simulation data]{Posterior mean and covariance estimates on linear random effects simulation data.}\label{fig:RE_SimulationResults}
\end{figure}

\end{knitrout}
\subsection{Mixture of normals} \label{sec:normal_mixture_model}
\paragraph{Model.}
Mixture models constitute some of the most popular models for MFVB application
\citep{blei:2003:lda, blei:2006:dp} and are often used as an example of where
MFVB covariance estimates may go awry \citep{bishop:2006:pattern,
turner:2011:two}. Thus, we will consider in detail a Gaussian mixture model
(GMM) consisting of a $K$-component mixture of $P$-dimensional multivariate
normals with unknown component means, covariances, and weights. In what follows,
the weight $\pi_k$ is the probability of the $k$th component, $\mu_k$ is the
$P$-dimensional mean of the $k$th component, and $\Lambda_k$ is the $P \times P$
precision matrix of the $k$th component (so $\Lambda_k^{-1}$ is the covariance
parameter).  $N$ is the number of data points, and $x_{n}$ is the $n$th observed
$P$-dimensional data point. We employ the standard trick of augmenting the data
generating process with the latent indicator variables $z_{nk}$, for $n=1,...,N$
and $k=1,...,K$, such that $z_{nk} = 1$ implies $x_{n} \sim \gauss(\mu_k,
\Lambda^{-1}_k)$. So the generative model is:
\begin{align}
P(z_{nk} = 1) = \pi_k, & \quad
p(x | \pi, \mu, \Lambda, z) =
    \prod_{n=1:N} \prod_{k=1:K} \gauss(x_n | \mu_k, \Lambda^{-1}_k)^{z_{nk}}
    \label{eq:normal_mixture_model}
\end{align}
We used diffuse conditionally conjugate priors (see \app{mvn_details} for
details). We make the variational assumption $q\left(\mu, \pi, \Lambda, z\right) =
\prod_{k=1}^K
q\left(\mu_k\right)q\left(\Lambda_k\right)q\left(\pi_k\right)\prod_{n=1}^N
q\left(z_{n}\right)$. We compare the accuracy and speed of our estimates to
Gibbs sampling on the augmented model (\eq{normal_mixture_model}) using the
function \texttt{rnmixGibbs} from the R package \texttt{bayesm}.  We implemented
LRVB in C++, making extensive use of
\texttt{RcppEigen}~\citep{rpackage:RcppEigen}. We evaluate our results both on
simulated data and on the MNIST data set~\citep{mnist:lecun1998gradient}.
\newcommand{\MNISTn}{12665}
\newcommand{\MNISTTestN}{2115}
\newcommand{\MNISTp}{25}
\newcommand{\MNISTTestAccuracy}{0.92}
\newcommand{\MNISTTestError}{0.08}

\newcommand{\GMMeffsizecutoff}{500}
\newcommand{\GMMsimulationsize}{198}
\newcommand{\GMMsimulationn}{10000}
\newcommand{\GMMsimulationp}{2}
\newcommand{\GMMsimulationk}{2}
\newcommand{\GMMsimulationvbtime}{2.82}
\newcommand{\GMMsimulationgibbstime}{444.44}

\paragraph{Results.} 

For simulations, we generated $N=\GMMsimulationn$ data points from
$K=\GMMsimulationk$ multivariate normal components in
$P=\GMMsimulationp$ dimensions.  MFVB is expected
to underestimate the marginal variance of $\mu$, $\Lambda$, and $\log(\pi)$
when the components overlap since that induces correlation in the
posteriors due to the uncertain classification of points between the
clusters. We check the covariances estimated with
\eq{spec_lrvb} against a Gibbs sampler, which we treat as the ground
truth.\footnote{The likelihood described in \mysec{normal_mixture_model} is symmetric under
relabeling.  When the component locations and shapes have
a real-life interpretation, the researcher is generally
interested in the uncertainty of $\mu$, $\Lambda$, and $\pi$ for a
particular labeling, not the
marginal uncertainty over all possible re-labelings.  This poses
a problem for standard MCMC methods, and we restrict our simulations
to regimes where label switching did not occur in our Gibbs sampler.
The MFVB solution conveniently avoids this problem since the mean field
assumption prevents it from representing more than one mode of the
joint posterior.}

We performed $\GMMsimulationsize$ simulations, each of which had
at least $\GMMeffsizecutoff$ effective Gibbs
samples in each variable---calculated with the R tool \texttt{effectiveSize}
from the \texttt{coda} package \citep{rpackage:coda}.
The first three plots show the diagonal standard deviations,
and the third plot shows the off-diagonal covariances.  Note
that the off-diagonal covariance plot excludes the MFVB estimates since most
of the values are zero.
\fig{SimulationStandardDeviations} shows that the
raw MFVB covariance estimates are often quite different from the
Gibbs sampler results, while the LRVB estimates match
the Gibbs sampler closely.

For a real-world example, we fit a $K=2$ GMM
to the $N=\MNISTn$ instances of handwritten $0$s and $1$s in the
MNIST data set. We used PCA to reduce the pixel intensities to $P=\MNISTp$
dimensions. Full details are provided in \app{mnist_details}.
In this MNIST analysis, the $\Lambda$
standard deviations were under-estimated by MFVB
but correctly estimated by LRVB (\fig{SimulationStandardDeviations}); the other parameter standard deviations
were estimated correctly by both and are not shown.
\begin{knitrout}
\definecolor{shadecolor}{rgb}{0.969, 0.969, 0.969}\color{fgcolor}\begin{figure}[ht!]

{\centering \includegraphics[width=0.19\linewidth,height=0.19\linewidth]{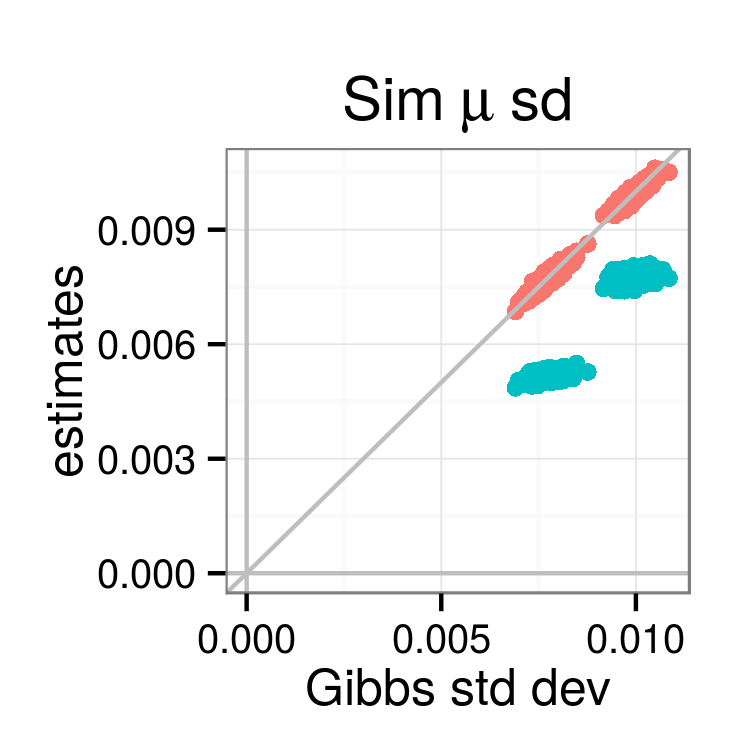}
\includegraphics[width=0.19\linewidth,height=0.19\linewidth]{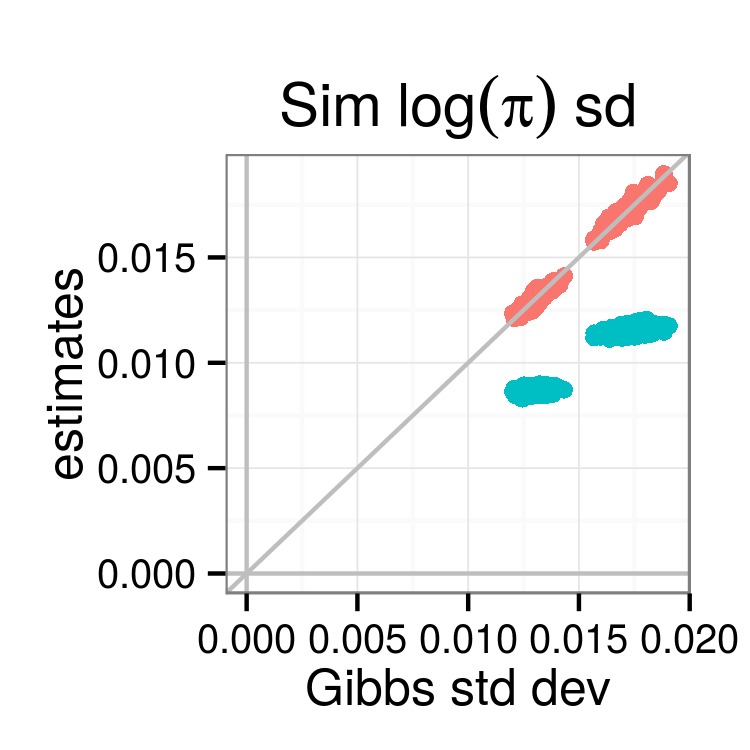}
\includegraphics[width=0.19\linewidth,height=0.19\linewidth]{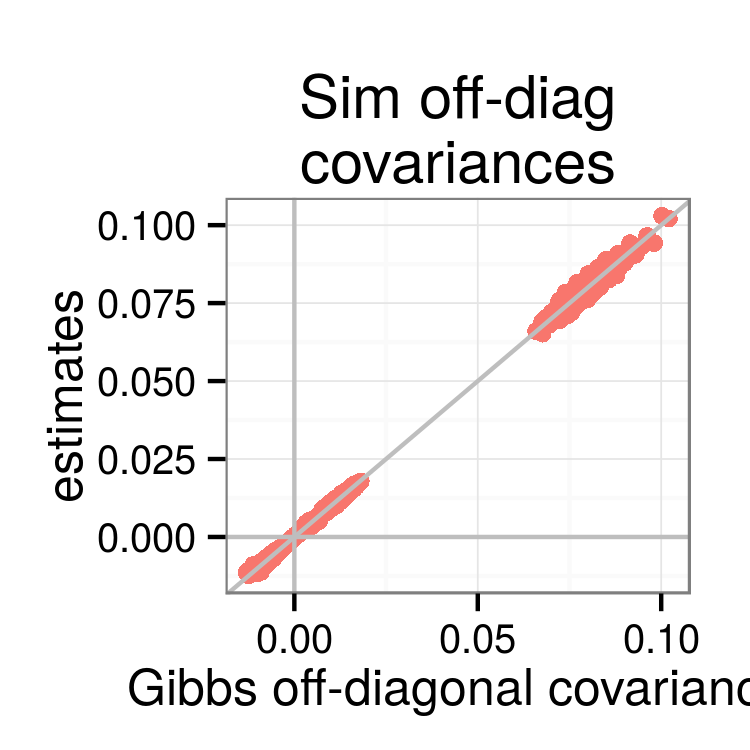}
\includegraphics[width=0.19\linewidth,height=0.19\linewidth]{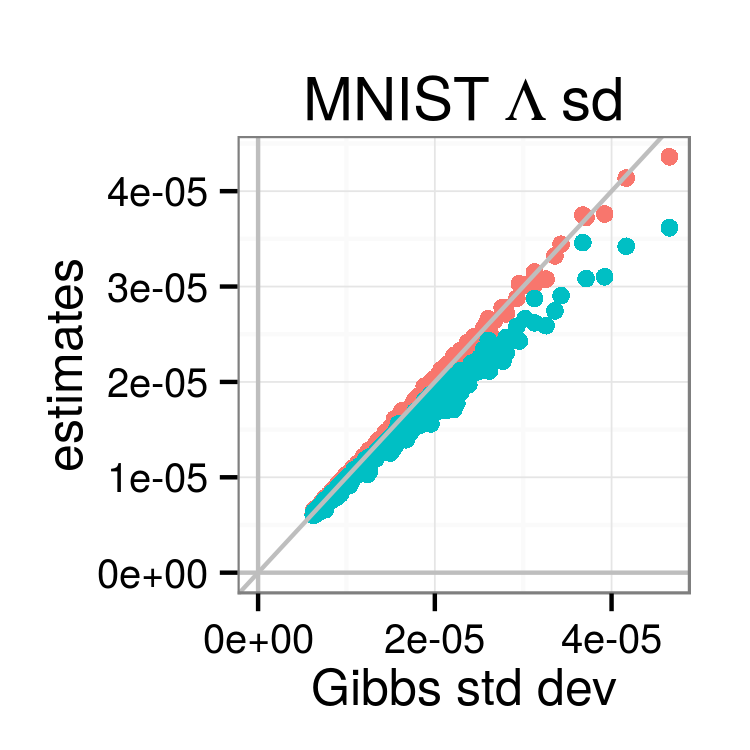}
\includegraphics[width=0.19\linewidth,height=0.19\linewidth]{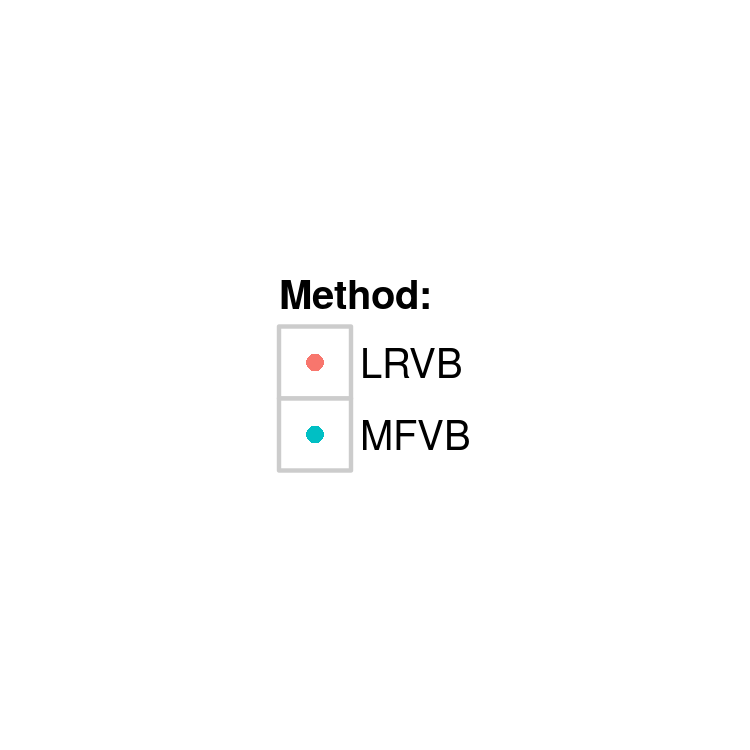}

}

\caption[Posterior mean and covariance estimates on GMM simulation and MNIST data]{Posterior mean and covariance estimates on GMM simulation and MNIST data.}\label{fig:SimulationStandardDeviations}
\end{figure}

\end{knitrout}
\subsection{Scaling experiments} \label{sec:gmm_scaling}

We here explore the computational scaling of LRVB in more depth
for the finite Gaussian mixture model (\mysec{normal_mixture_model}).
In the terms of \mysec{scaling_formulas}, $\alpha$
includes the sufficient statistics from $\mu$, $\pi$, and $\Lambda$,
and grows as $O(KP^2)$.
The sufficient statistics for the variational posterior of
$\mu$ contain the $P$-length vectors $\mu_k$, for each
$k$, and the $(P + 1) P / 2$ second-order products
in the covariance matrix $\mu_k \mu_k^T$.  Similarly, for each $k$,
the variational posterior of $\Lambda$ involves the
$(P + 1) P / 2$ sufficient statistics in the symmetric matrix
$\Lambda_k$ as well as the term $\log |\Lambda_k|$.  The
sufficient statistics for the posterior of $\pi_k$ are the $K$
terms $\log \pi_k$.\footnote{Since $\sum_{k=1}^{K} \pi_k = 1$, using $K$
sufficient statistics involves one redundant parameter.
However, this does not violate any of the necessary assumptions
for \eq{spec_lrvb}, and it considerably simplifies the calculations.
Note that though the perturbation argument of \mysec{lr}
requires the parameters of
$p(\theta | x)$ to be in the interior of the feasible space,
it does not require that the parameters of $p(x | \theta)$
be interior.}
So, minimally, \eq{spec_lrvb}
will require the inverse of a matrix of size $O(KP^2)$.
The sufficient statistics for
$z$ have dimension $K \times N$.  Though
the number of parameters thus grows with the number of
data points, $H_{z}=0$ for the multivariate normal
(see \app{mvn_details}),
so we can apply \eq{nuisance_lrvb_est} to replace the
inverse of an $O(KN)$-sized matrix with multiplication by the same matrix.
Since a matrix inverse is cubic in the size of the matrix,
the worst-case scaling for LRVB is then $O(K^2)$ in $K$,
$O(P^6)$ in $P$, and $O(N)$ in $N$.

In our simulations (\fig{ScalingGraphs}) we can see that,
in practice, LRVB scales linearly\footnote{The Gibbs sampling time was linearly rescaled to the amount
of time necessary to achieve 1000 effective samples in the slowest-mixing
component of any parameter.  Interestingly, this rescaling
leads to increasing efficiency in the Gibbs sampling at low $P$ due
to improved mixing, though the benefits cease to accrue at moderate dimensions.}
 in $N$
and approximately cubically in $P$ across the dimensions considered.\footnote{For numeric stability
we started the optimization procedures for MFVB at the true
values, so the time to compute the optimum in our simulations
was very fast and not representative of practice.
On real data, the optimization time will depend on the
quality of the starting point.
Consequently, the times shown for LRVB are only the
times to compute the LRVB estimate.  The optimization times were
on the same order.}
The $P$ scaling is presumably better than the theoretical worst
case of $O(P^6)$ due to extra efficiency in the numerical linear algebra.
Note that the vertical axis of the leftmost plot is on the log scale.
At all the values of $N$, $K$ and $P$
considered here, LRVB was at least as fast as Gibbs sampling and
often orders of magnitude faster.
\begin{knitrout}
\definecolor{shadecolor}{rgb}{0.969, 0.969, 0.969}\color{fgcolor}\begin{figure}[ht!]

{\centering \includegraphics[width=0.3\linewidth,height=0.25\linewidth]{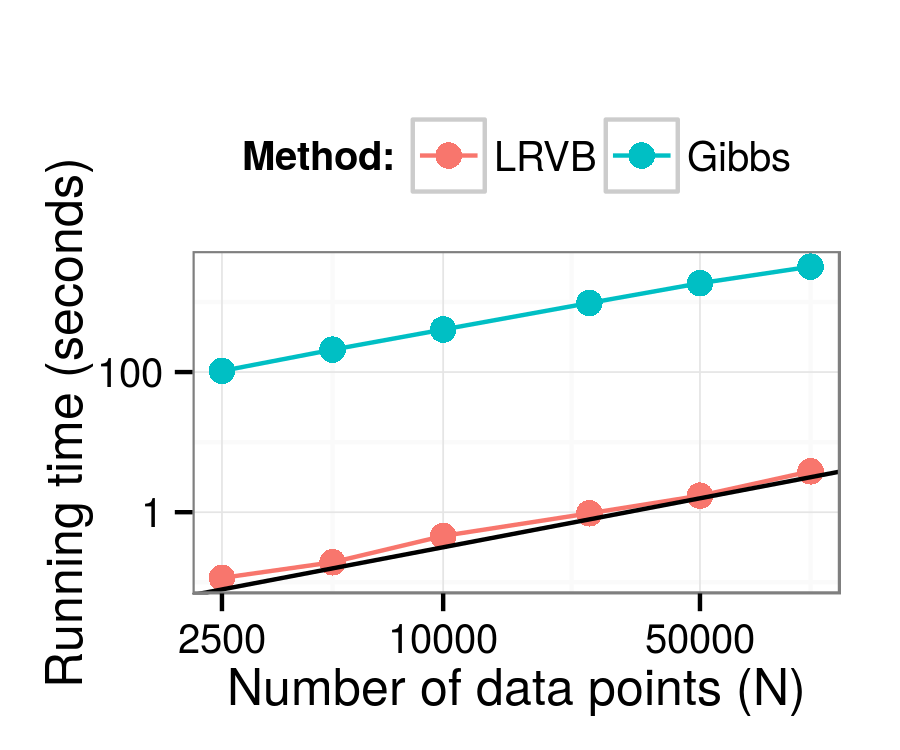}
\includegraphics[width=0.3\linewidth,height=0.25\linewidth]{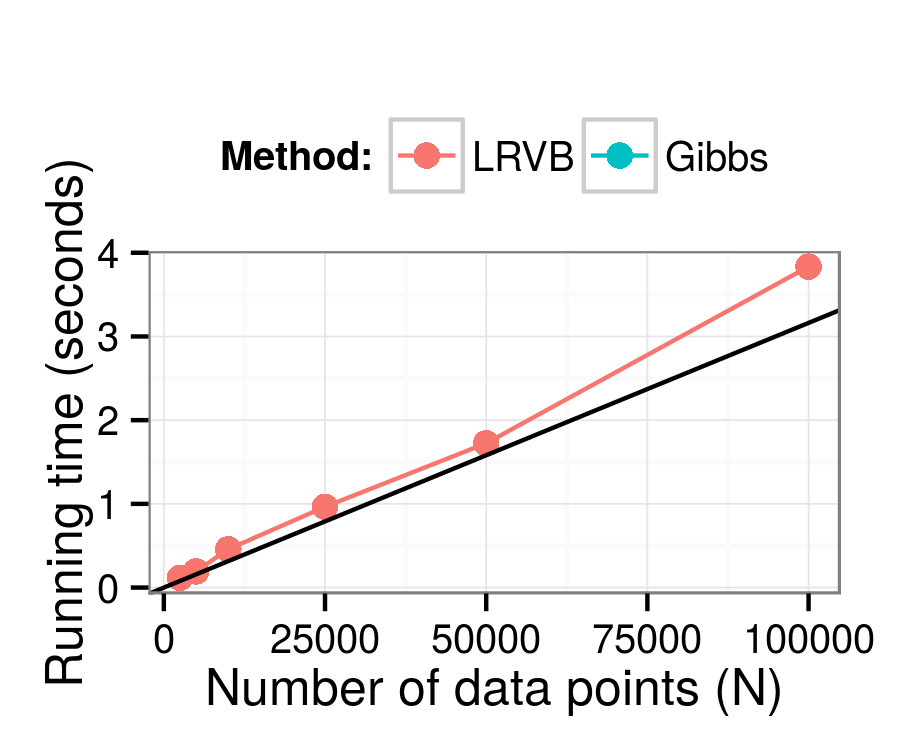}
\includegraphics[width=0.3\linewidth,height=0.25\linewidth]{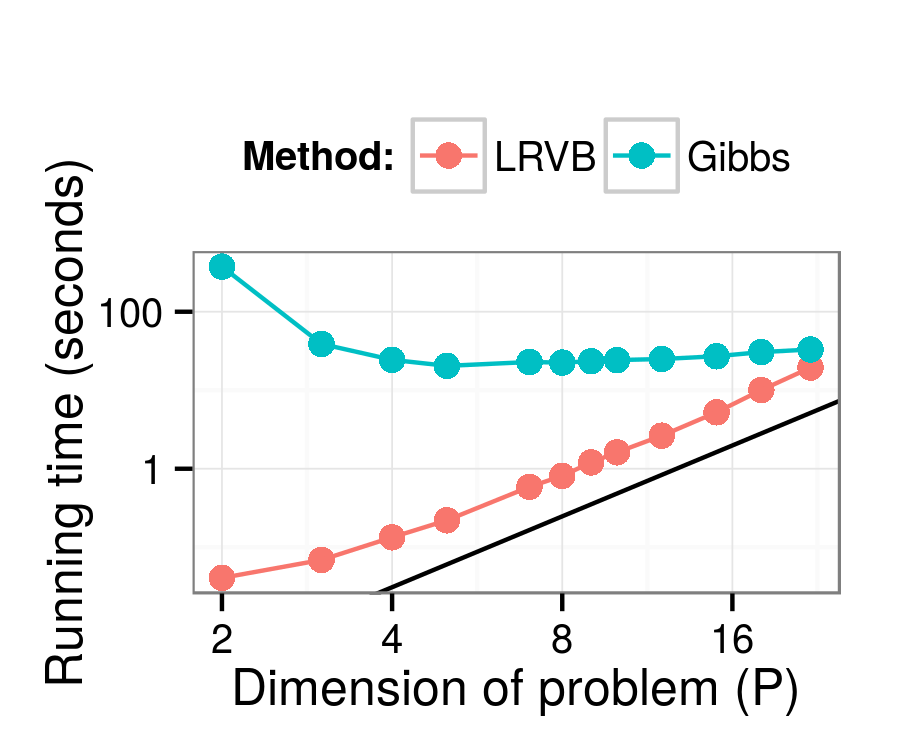}

}

\caption[Scaling of LRVB and Gibbs on simulation data in both log and linear scales]{Scaling of LRVB and Gibbs on simulation data in both log and linear scales.  Before taking logs, the line in the two lefthand (N) graphs is $y\propto x$, and in the righthand (P) graph, it is $y \propto x^3$.}\label{fig:ScalingGraphs}
\end{figure}

\end{knitrout}

\section{Conclusion} \label{sec:conclusion}

The lack of accurate covariance estimates from the widely used mean-field
variational Bayes (MFVB) methodology has been a longstanding shortcoming of
MFVB. We have demonstrated that in sparse models, our method, linear response
variational Bayes (LRVB), can correct MFVB to deliver these covariance estimates
in time that scales linearly with the number of data points.  Furthermore, we
provide an easy-to-use formula for applying LRVB to a wide range of inference
problems.
Our experiments on a diverse set of models have demonstrated the efficacy of
LRVB, and our detailed study of scaling of mixtures of multivariate Gaussians
shows that LRVB can be considerably faster than traditional MCMC methods. We
hope that in future work our results can be extended to more complex models,
including Bayesian nonparametric models, where MFVB has proven its practical
success.

\paragraph{Acknowledgments.}

The authors thank Alex Blocker for helpful comments.
R.~Giordano and T.~Broderick were funded by Berkeley Fellowships.

{\small
\bibliographystyle{plain} 
\bibliography{lrvb_nips_arxiv}
}


\onecolumn
\appendix

\iftoggle{arxivformat} {
}{
  \renewcommand{\appendixpagename}{Supplementary Material}
}

\appendixpage

You can find this paper, as well as all the code necessary to run the
described experiments, in our Github repo,
\href{https://github.com/rgiordan/LinearResponseVariationalBayesNIPS2015}
{\texttt{rgiordan/LinearResponseVariationalBayesNIPS2015}}.

\section{LRVB estimates of the covariance of functions}
\label{app:function_covariance}

In \mysec{lr_subsection}, we derived an estimate of the covariance of the
natural sufficient statistics, $\theta$, of our variational approximation,
$q(\theta)$. In this section we derive a version of \eq{spec_lrvb} for the
covariance of functions of $\theta$.

We begin by estimating the covariance between $\theta$ and a function
$\phi(\theta)$. Suppose we have an MFVB solution, $q(\theta)$, to \eq{kl}.
Define the expectation of $\phi(\theta)$ to be $\mbeq\left[\phi(\theta)\right] :=
f(m)$.  This expectation is function of $m$ alone since $m$ completely
parameterizes $q$. As in \eq{perturbed_dens}, we can consider a perturbed log
likelihood that also includes $f\left(m\right)$:
\begin{eqnarray*}
  \log p_{t}\left(\theta\vert x \right) & = &
    \log p+t_{0}^{T}m+t_{f}f\left(m\right):=\log p+t^{T}m_{f}\\
  t & := & \left(\begin{array}{c}
  t_{0}\\
  t_{f}
  \end{array}\right) \quad \quad
  m_{f} := \left(\begin{array}{c}
  m\\
  f\left(m\right)
  \end{array}\right)
\end{eqnarray*}
Using the same reasoning that led to \eq{lrvb_derivative_defn}, we will define
$$
\truecov_{\theta \phi} = \cov_p(\theta, \phi(\theta))
  \approx \frac{dm_t^{*}}{dt_f} =: \lrcov_{\theta\phi}
$$
We then have the following lemma:
\begin{lemma} \label{lem:theta_function_covariance}
  If $\mbeq\left[\phi(\theta)\right] =: f(m)$ is a differentiable function of $m$
  with gradient $\nabla f$, then
  $$
  \lrcov_{\theta\phi} = \lrcov \nabla f
  $$
\end{lemma}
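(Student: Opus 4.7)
The plan is to mimic, almost verbatim, the perturbation argument used in \mysec{lr_subsection} to derive \eq{gen_lrcov}, but with the augmented perturbation $t_0^T m + t_f f(m)$ playing the role of the single linear perturbation $t^T m$. The key observation is that the new perturbation contributes a term proportional to $\nabla f$ through the chain rule, and everything else carries over unchanged.

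First I would write down the perturbed objective whose maximizer is $m_t^*$. Since $-\kl(q\|p_t) = E + t_0^T m + t_f f(m) =: E_t$, the first-order optimality condition for $m_t^*$ reads
\begin{equation*}
  \left.\frac{\partial E}{\partial m}\right|_{m_t^*} + t_0 + t_f\,\nabla f(m_t^*) \;=\; 0.
\end{equation*}
This is the analog of \eq{fixed_pt} for the augmented perturbation and is valid for all $t$ in an open neighborhood of $0$ by the same strict-local-optimum assumption used in the main text.

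Next I would differentiate this identity implicitly with respect to $t_f$. Taking a transpose-derivative gives
\begin{equation*}
  \left.\frac{\partial^2 E}{\partial m\,\partial m^T}\right|_{m_t^*}\frac{dm_t^*}{dt_f} \;+\; \nabla f(m_t^*) \;+\; t_f\,\frac{\partial \nabla f}{\partial m^T}\bigg|_{m_t^*}\frac{dm_t^*}{dt_f} \;=\; 0.
\end{equation*}
Evaluating at $t=0$, the last term drops out, and inverting the Hessian (which is possible by the same assumption used for \eq{gen_lrcov}) yields
\begin{equation*}
  \lrcov_{\theta\phi} \;=\; \left.\frac{dm_t^*}{dt_f}\right|_{t=0} \;=\; -\left(\frac{\partial^2 E}{\partial m\,\partial m^T}\right)^{-1}\nabla f \;=\; \lrcov\,\nabla f,
\end{equation*}
which is the claim.

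There is no real obstacle here: the proof is a one-line implicit differentiation once the correct optimality condition for the augmented perturbation is written down. The only subtlety worth flagging is that $f$ must be differentiable so that $\nabla f$ exists, which is exactly the hypothesis of the lemma, and that the Hessian inverse used is well-defined by the same regularity assumption invoked in the derivation of \eq{gen_lrcov}.
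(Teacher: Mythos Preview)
Your proposal is correct and follows essentially the same approach as the paper. The paper phrases the argument via the fixed-point map $M_t(m)=M(m)+t_0+t_f\nabla f(m)$ and differentiates $m_t^*=M_t(m_t^*)$, whereas you differentiate the equivalent first-order condition $\partial E_t/\partial m=0$ directly; in both cases the term involving the Hessian of $f$ vanishes at $t=0$ and one inverts the Hessian of $E$ to obtain $\lrcov_{\theta\phi}=\lrcov\,\nabla f$.
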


\begin{proof}
  The derivative of the perturbed ELBO, $E_t$, is given by:
  \begin{eqnarray*}
  E_{t} & := & E+t^{T}m_{f}\\
  \frac{\partial E_{t}}{\partial m} & = &
  \frac{\partial E}{\partial m}+\left(\begin{array}{cc}
  I & \nabla f\end{array}\right)\left(\begin{array}{c}
  t_{0}\\
  t_{f}
  \end{array}\right)\\
  \end{eqnarray*}
  The fixed point \eq{fixed_pt} then gives:
  \begin{eqnarray*}
  M_{t}\left(m\right) & := & M\left(m\right)+\left(\begin{array}{cc}
  I & \nabla f\end{array}\right)\left(\begin{array}{c}
  t_{0}\\
  t_{f}
  \end{array}\right)\\
  \frac{dm_{t}^{*}}{dt^{T}} & = &
    \left.\frac{\partial M_{t}}{\partial m^{T}}\right|_{_{m=m_{t}^{*}}}
    \frac{dm_{t}^{*}}{dt^{T}}+\frac{\partial M_{t}}{\partial t^{T}}\\
   & = & \left(\left.\frac{\partial M}{\partial m^{T}}\right|_{_{m=m_{t}^{*}}}+
   \frac{\partial}{\partial m^{T}}\left(\begin{array}{cc}
  I & \nabla f\end{array}\right)\left(\begin{array}{c}
  t_{0}\\
  t_{f}
  \end{array}\right)\right)\frac{dm^{*}}{dt^{T}}+\left(\begin{array}{cc}
  I & \nabla f\end{array}\right)
  \end{eqnarray*}
  The term $\frac{\partial}{\partial m^{T}}\left(\begin{array}{cc}
  I & \nabla f\end{array}\right)\left(\begin{array}{c}
  t_{0}\\
  t_{f}
  \end{array}\right)$ is awkward, but it disappears when we evaluate at $t=0$,
  giving
  \begin{eqnarray*}
  \frac{dm_{t}^{*}}{dt^{T}} & = &
    \left(\left.\frac{\partial M}{\partial m^{T}}\right|_{_{m=m_{t}^{*}}}\right)
    \frac{dm^{*}}{dt^{T}}+\left(\begin{array}{cc}
  I & \nabla f\end{array}\right)\\
   & = & \left(\frac{\partial^{2}E}{\partial m\partial m^{T}}+
    I\right)\frac{dm^{*}}{dt^{T}}+\left(\begin{array}{cc}
  I & \nabla f\end{array}\right) \Rightarrow \\
  \frac{dm^{*}}{dt^{T}} & = &
    -\left(\frac{\partial^{2}E}{\partial m\partial m^{T}}\right)^{-1}
    \left(\begin{array}{cc}I & \nabla f\end{array}\right)
  \end{eqnarray*}
  Recalling that
  \begin{eqnarray*}
  \frac{dm^{*}}{dt_{0}^{T}} & := & \lrcov
  \end{eqnarray*}
  We can plug in to see that
  \begin{equation}
  \lrcov_{\theta\phi} = \frac{dm^{*}}{dt_{f}} = \lrcov \nabla f
  \end{equation}
\end{proof}
Finally, suppose we are interested in estimating $\cov_p(\gamma(\theta),
\phi(\theta))$, where $g(m) := \mbeq\left[\gamma(\theta)\right]$.  Again using
the same reasoning that led to \eq{lrvb_derivative_defn}, we will define
\begin{equation*}
\truecov_{\gamma\phi} = \cov_p(\gamma(\theta), \phi(\theta))
\approx \frac{d \mbeq\left[\gamma(\theta)\right]}{dt_f} =: \lrcov_{\gamma\phi}
\end{equation*}
\begin{proposition} \label{prop:function_function_covariance}
  If $\mbeq\left[\phi(\theta)\right] = f(m)$ and
  $\mbeq\left[\gamma(\theta)\right] = g(m)$ are differentiable functions of $m$
  with gradients $\nabla f$ and $\nabla g$ respectively, then
  $$
    \lrcov_{\gamma\phi} = \nabla g^{T}\lrcov \nabla f
  $$
\end{proposition}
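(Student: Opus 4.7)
The plan is to reduce the proposition directly to Lemma \ref{lem:theta_function_covariance} via the chain rule, since the hard analytic work (perturbing the fixed point equation and inverting the Hessian) has already been done there.

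First I would set up the perturbed log posterior exactly as in the proof of the lemma, but with the perturbation $t_f \cdot f(m)$ attached to $f$ rather than to the sufficient statistic $\theta$ itself:
\begin{equation*}
\log p_t(\theta \vert x) = \log p(\theta \vert x) + t_0^T \theta + t_f f(m) - C(t),
\end{equation*}
with the convention $\truecov_{\gamma\phi} \approx \frac{d g(m_t^*)}{dt_f}\big|_{t=0} =: \lrcov_{\gamma\phi}$, matching the definition given just before the proposition statement.

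Next I would apply the chain rule: since $g$ is a differentiable function of $m$ and $m_t^*$ is (implicitly) differentiable in $t$,
\begin{equation*}
\frac{d g(m_t^*)}{dt_f}\bigg|_{t=0} = \nabla g(m^*)^T \, \frac{d m_t^*}{dt_f}\bigg|_{t=0}.
\end{equation*}
The quantity $\frac{d m_t^*}{dt_f}\big|_{t=0}$ is exactly what Lemma \ref{lem:theta_function_covariance} computes, namely $\lrcov \nabla f$. Substituting gives $\lrcov_{\gamma\phi} = \nabla g^T \lrcov \nabla f$, which is the claim.

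There is no serious obstacle here; the only subtlety is making sure the hypotheses of Lemma \ref{lem:theta_function_covariance} actually apply verbatim when we are interested in $g$ rather than $m$. The perturbation structure is identical (the extra term $t_f f(m)$ is the same), so the fixed-point derivation goes through unchanged and yields the same formula $\frac{dm^*}{dt_f} = \lrcov \nabla f$. The chain rule step is immediate because $\mathbb{E}_q[\gamma(\theta)] = g(m)$ depends on $t$ only through $m_t^*$. Thus the proposition follows in one short step from the lemma.
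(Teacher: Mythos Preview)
Your proposal is correct and follows essentially the same route as the paper: apply the chain rule to $g(m_t^*)$ and invoke \lem{theta_function_covariance} for $dm_t^*/dt_f = \lrcov \nabla f$. The paper's proof is in fact even terser, collapsing the setup discussion you include into a single chain-rule line.
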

\begin{proof}
  By \lem{theta_function_covariance} an application of the chain rule,
  \begin{eqnarray*}
  \lrcov_{\gamma\phi} =
  \frac{d \mbeq\left[\gamma(\theta)\right]}{dt_f} =
  \frac{d g\left(m\right)}{dt_f} & = & \frac{dg(m)}{dm^{T}}\frac{dm}{dt_{f}}
    =  \nabla g^{T}\lrcov \nabla f
  \end{eqnarray*}
\end{proof}


\section{Exactness of LRVB for multivariate normal means} \label{app:mvn_exact}

For any target distribution $p(\theta | x)$, it is well-known that MFVB cannot
be used to estimate the covariances between the components of $\theta$. In
particular, if $q^*$ is the estimate of $p(\theta | x)$ returned by MFVB, $q^*$
will have a block-diagonal covariance matrix---no matter the form of the
covariance of $p(\theta | x)$.

Consider approximating a multivariate Gaussian posterior distribution
$p(\theta|x)$ with MFVB. The Gaussian is the unique distribution that is fully
determined by its mean and covariance. This posterior arises, for instance,
given a multivariate normal likelihood $p(x | \mu) = \prod_{n=1:N} \gauss(x_n |
\mu, S)$ with fixed covariance $S$ and an improper uniform prior on the mean
parameter $\mu$. We make the mean field factorization assumption
$q(\mu)=\prod_{d=1:D} q(\mu_d)$, where $D$ is the total dimension of $\mu$. This
fact is often used to illustrate the shortcomings of MFVB
\citep{wang:2005:inadequacy,bishop:2006:pattern,turner:2011:two}.
In this case, it is well known that the MFVB posterior means are correct, but the
marginal variances are underestimated if $S$ is not diagonal.
However, since the posterior means are correctly estimated,
the LRVB approximation in \eq{spec_lrvb} is in fact an equality.
That is, for this model,
$\lrcov = d \mpq_t / d t^T = \truecov$ exactly.

In order to prove this result, we will rely on the following lemma.
\begin{lemma} \label{lem:lrvb_mvn}
  Consider a target posterior distribution characterized by $p(\theta | x) =
  \gauss(\theta | \mu, \Sigma)$, where $\mu$ and $\Sigma$ may depend on $x$, and
  $\Sigma$ is invertible. Let $\theta = (\theta_{1}, \ldots, \theta_{J})$, and
  consider a MFVB approximation to $p(\theta| x)$ that factorizes as $q(\theta) =
  \prod_{j} q(\theta_j)$. Then the variational posterior means are the true
  posterior means; i.e. $m_j = \mu_j$ for all $j$ between $1$ and $J$.
\end{lemma}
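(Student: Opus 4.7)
The plan is to exploit the two special features of a Gaussian: its log density is quadratic in $\theta$, and its precision matrix $\Lambda := \Sigma^{-1}$ is invertible. Write the target in canonical form as
\[
\log p(\theta\vert x) = -\tfrac{1}{2}\sum_{j,k=1}^{J}(\theta_j-\mu_j)^T \Lambda_{jk}(\theta_k-\mu_k) + \text{const},
\]
where $\Lambda_{jk}$ is the block of $\Lambda$ conformable with the partition $\theta=(\theta_1,\ldots,\theta_J)$. The goal is to show that the MFVB fixed-point equations for the means $m_j := \mbe_{q^*}\theta_j$ reduce to the linear system $\Lambda(m-\mu)=0$, whose unique solution (since $\Lambda$ is invertible) is $m=\mu$.

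First I would invoke the standard coordinate-ascent characterization of the MFVB optimum: for each $j$,
\[
\log q^*(\theta_j) = \mbe_{q^*_{-j}}\left[\log p(\theta\vert x)\right] + \text{const}.
\]
Taking this expectation only the $j$th diagonal block survives as a quadratic in $\theta_j$, while every cross block $\Lambda_{jk}$ with $k\ne j$ contributes a linear term through $\mbe_{q^*_{-j}}(\theta_k-\mu_k) = m_k - \mu_k$. Hence $q^*(\theta_j)$ is Gaussian with precision $\Lambda_{jj}$ and mean determined by the first-order condition
\[
\Lambda_{jj}(m_j-\mu_j) + \sum_{k\ne j} \Lambda_{jk}(m_k-\mu_k) = 0.
\]

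Second, I would stack these $J$ block equations into the single linear system $\Lambda(m-\mu)=0$ on the full vector $\theta$. Since $\Lambda=\Sigma^{-1}$ is invertible by hypothesis, this forces $m=\mu$, establishing $m_j=\mu_j$ for all $j$. The only real subtlety (rather than a genuine obstacle) is the block notation: because each $\theta_j$ may be a sub-vector, one must be careful that $\Lambda_{jk}$ denotes the corresponding block of the precision matrix, not a scalar entry, and that ``completing the square'' in $\theta_j$ is done block-wise. Nothing depends on any assumption about the variational covariance; the calculation uses only that expectations under $q^*_{-j}$ factor and that $\log p$ is exactly quadratic, which is what makes the marginal means exact even though the off-diagonal covariances typically are not.
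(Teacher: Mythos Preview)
Your proposal is correct and follows essentially the same route as the paper: derive the optimal $q^*(\theta_j)$ from the coordinate-ascent identity, read off that it is Gaussian with a mean satisfying a block-linear equation, stack the equations into $\Lambda(m-\mu)=0$, and invoke invertibility of $\Lambda$. The only cosmetic difference is that the paper first isolates $m_j$ by multiplying through by $\Lambda_{jj}^{-1}$ (noting $\Lambda_{jj}$ is invertible) before re-stacking, whereas you write the $j$th block row of $\Lambda(m-\mu)=0$ directly, which is slightly more economical.
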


\begin{proof}
  The derivation of MFVB for the multivariate normal can be found in Section
  10.1.2 of \citep{bishop:2006:pattern}; we highlight some key results here. Let
  $\Lambda = \Sigma^{-1}$. Let the $j$ index on a row or column correspond to
  $\theta_j$, and let the $-j$ index correspond to $\{\theta_{i}: i \in
  [J]\setminus j\}$. E.g., for $j=1$,
  $$
    \Lambda
      = \left[ \begin{array}{ll}
          \Lambda_{11} & \Lambda_{1,-1} \\
          \Lambda_{-1,1} & \Lambda_{-1,-1}
        \end{array} \right].
  $$
  By the assumption that $p(\theta | x) = \gauss(\theta | \mu, \Sigma)$, we have
\begin{eqnarray}\label{eq:mvn_variational_dist}
  \lefteqn{\log p(\theta_{j} | \theta_{i \in [J]\setminus j}, x)} \nonumber\\
      &=& -\frac{1}{2} (\theta_{j} - \mu_{j})^{T} \Lambda_{jj} (\theta_j - \mu_j) +
         (\theta_{j} - \mu_{j})^{T} \Lambda_{j,-j} (\theta_{-j} - \mu_{-j}) + \constant,
\end{eqnarray}
  where the final term is constant with respect to $\theta_{j}$.
  It follows that
  \begin{align*}
    \log q^{*}_{j}(\theta_j)
      &= \mbe_{q^{*}_{i}: i \in [J]\setminus j} \log p(\theta, x) + \constant \\
      &= -\frac{1}{2} \theta_{j}^{T} \Lambda_{jj} \theta_j + \theta_j \mu_j \Lambda_{jj} - \theta_j \Lambda_{j,-j} (\mbe_{q^{*}} \theta_{-j} - \mu_{-j}).
  \end{align*}
  So
  \begin{equation*}
    q^*_j(\theta_j) = \gauss(\theta_j | m_{j}, \Lambda_{jj}^{-1}),
  \end{equation*}
  with mean parameters
  \begin{equation} \label{eq:mvn_stable_point}
    m_{j} = \mbe_{q^{*}_j} \theta_j = \mu_{j} - \Lambda_{jj}^{-1} \Lambda_{j,-j} (m_{-j} - \mu_{-j})
  \end{equation}
  as well as an equation for $\mbe_{q^{*}} \theta^T \theta$.

Note that $\Lambda_{jj}$ must be invertible, for if it were not, $\Sigma$ would
not be invertible.

The solution $m = \mu$ is a unique stable point for \eq{mvn_stable_point}, since
the fixed point equations for each $j$ can be stacked and rearranged to give
\begin{eqnarray*}
m-\mu & = & -\left[\begin{array}{ccccc}
0 & \Lambda_{11}^{-1}\Lambda_{12} & \cdots & \Lambda_{11}^{-1}\Lambda_{1\left(J-1\right)} & \Lambda_{11}^{-1}\Lambda_{1J}\\
\vdots &  & \ddots &  & \vdots\\
\Lambda_{JJ}^{-1}\Lambda_{J1} & \Lambda_{JJ}^{-1}\Lambda_{J2} & \cdots & \Lambda_{JJ}^{-1}\Lambda_{J\left(J-1\right)} & 0
\end{array}\right]\left(m-\mu\right)\\
 & = & -\left[\begin{array}{ccccc}
\Lambda_{11}^{-1} & \cdots & 0 & \cdots & 0\\
\vdots & \ddots &  &  & \vdots\\
0 &  & \ddots &  & 0\\
\vdots &  &  & \ddots & \vdots\\
0 & \cdots & 0 & \cdots & \Lambda_{JJ}^{-1}
\end{array}\right]\left[\begin{array}{ccccc}
0 & \Lambda_{12} & \cdots & \Lambda_{1\left(J-1\right)} & \Lambda_{1J}\\
\vdots &  & \ddots &  & \vdots\\
\Lambda_{J1} & \Lambda_{J2} & \cdots & \Lambda_{J\left(J-1\right)} & 0
\end{array}\right]\left(m-\mu\right)\Leftrightarrow\\
0 & = & \left[\begin{array}{ccccc}
\Lambda_{11} & \cdots & 0 & \cdots & 0\\
\vdots & \ddots &  &  & \vdots\\
0 &  & \ddots &  & 0\\
\vdots &  &  & \ddots & \vdots\\
0 & \cdots & 0 & \cdots & \Lambda_{JJ}
\end{array}\right]\left(m-\mu\right) +\\
&& \left[\begin{array}{ccccc}
0 & \Lambda_{12} & \cdots & \Lambda_{1\left(J-1\right)} & \Lambda_{1J}\\
\vdots &  & \ddots &  & \vdots\\
\Lambda_{J1} & \Lambda_{J2} & \cdots & \Lambda_{J\left(J-1\right)} & 0
\end{array}\right]\left(m-\mu\right)\Leftrightarrow\\
0 & = & \Lambda \left(m-\mu\right) \Leftrightarrow\\
m & = & \mu.
\end{eqnarray*}
The last step follows from the assumption that $\Sigma$ (and hence $\Lambda$)
is invertible.  It follows that $\mu$ is the unique stable point of
\eq{mvn_stable_point}.

\end{proof}

\begin{proposition} \label{prop:lrvb_mvn}
  Assume we are in the setting of \lem{lrvb_mvn}, where additionally
  $\mu$ and $\Sigma$ are on the interior of the feasible parameter space.
  Then the LRVB covariance estimate exactly captures the true covariance,
  $\hat{\Sigma} = \Sigma$.

\end{proposition}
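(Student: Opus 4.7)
The strategy is to observe that the exponential tilt used to define the LRVB estimate preserves the multivariate normal structure, so that Lemma~\ref{lem:lrvb_mvn} can be re-applied to the perturbed problem and the ``approximation'' in \eq{lrvb_derivative_defn} becomes an exact identity. Concretely, I would proceed in three steps.

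First, I would identify the perturbed posterior $p_t(\theta\vert x)$ explicitly. Writing $\Lambda = \Sigma^{-1}$ and expanding
$\log p_t(\theta\vert x) = -\tfrac{1}{2}(\theta-\mu)^T\Lambda(\theta-\mu) + t^T\theta - C(t) + \text{const}$,
I would complete the square in $\theta$ to conclude that $p_t(\theta\vert x) = \gauss(\theta\vert\mu_t,\Sigma)$ with shifted mean $\mu_t = \mu + \Sigma t$ and identical covariance $\Sigma$. Since $\mu$ and $\Sigma$ lie in the interior of the feasible parameter space, this remains a valid Gaussian posterior for all $t$ in some open neighborhood of $0$.

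Second, I would apply Lemma~\ref{lem:lrvb_mvn} to the perturbed target $p_t(\theta\vert x)$. Because $p_t$ is again Gaussian with invertible covariance and the same factorized MFVB family is assumed, the lemma gives that the MFVB mean parameter equals the true mean exactly: $m_t^* = \mu_t = \mu + \Sigma t$. In particular, the key approximation $m_t^* \approx \mbe_{p_t}\theta$ that underlies \eq{lrvb_derivative_defn} holds with equality throughout the neighborhood, not merely at $t=0$.

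Third, I would differentiate this exact identity with respect to $t^T$ and evaluate at $t=0$:
\begin{equation*}
\hat{\Sigma} \;=\; \left.\frac{dm_t^*}{dt^T}\right|_{t=0} \;=\; \left.\frac{d(\mu+\Sigma t)}{dt^T}\right|_{t=0} \;=\; \Sigma.
\end{equation*}
This gives the desired exact equality. The only mildly delicate point, which I would handle in a sentence or two, is justifying that the MFVB fixed point $m_t^*$ varies smoothly (in fact, affinely) in $t$ near $0$: this follows from the uniqueness of the stable point established in the last display of the proof of Lemma~\ref{lem:lrvb_mvn}, applied now with $\mu$ replaced by $\mu_t$, together with the interior-point assumption that guarantees $\Lambda_{jj}$ remains invertible under small perturbations. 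I do not anticipate any real obstacle beyond this bookkeeping, since the whole argument hinges on the algebraic fact that a linear tilt of a Gaussian is a Gaussian of the same covariance.
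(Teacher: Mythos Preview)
Your proposal is correct and follows essentially the same route as the paper: perturb the Gaussian posterior, observe it remains Gaussian so that \lem{lrvb_mvn} applies to $p_t$, conclude $m_t^* = \mu_t$ exactly, and differentiate in $t$. You are a bit more explicit than the paper in writing out $\mu_t = \mu + \Sigma t$ and differentiating that affine map directly, whereas the paper simply notes that $m_{t,j} = \mu_{t,j}$ implies $\tfrac{d m_t}{dt^T} = \tfrac{d}{dt^T}\mbe_{p_t}\theta = \Sigma$; but this is elaboration, not a different argument.
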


\begin{proof}

  Consider the perturbation for LRVB defined in \eq{perturbed_dens}.
  By perturbing the log likelihood, we change both the true means $\mu_t$
  and the variational solutions, $m_t$. The result is a valid
  density function since the original $\mu$ and $\Sigma$ are on the
  interior of the parameter space.
  By \lem{lrvb_mvn}, the MFVB solutions are exactly the true
  means, so $m_{t,j} = \mu_{t,j}$, and the derivatives are the same
  as well.  This means that the first term in \eq{spec_lrvb} is
  not approximate, i.e.
  \begin{equation*}
  \frac{d \mpq_{t}}{d t^{T}}
    = \frac{d}{d t^{T}} \mbe_{p_{t}} \theta
    = \truecov_{t},
  \end{equation*}
  It follows from the arguments above that the LRVB covariance
  matrix is exact, and $\hat{\Sigma} = \Sigma$.

\end{proof}

\section{Comparison with supplemented expectation-maximization}\label{app:SEM}

The result in \app{mvn_exact} about the multivariate normal distribution
draws a connection between LRVB
corrections and the ``supplemented expectation-maximization'' (SEM)
method of \citep{meng:1991:using}.  SEM is an asymptotically
exact covariance correction for the EM algorithm that transforms
the full-data Fisher information matrix into the observed-data Fisher
information matrix using a correction that is formally similar to
\eq{spec_lrvb}.  In this section, we argue that this similarity
is not a coincidence; in fact the SEM correction is an
asymptotic version of LRVB with two variational blocks,
one for the missing data and one for the unknown parameters.

Although LRVB as described here requires a prior
(unlike SEM, which supplements the MLE),
the two covariance corrections coincide when
the full information likelihood is approximately log quadratic
and proportional to the posterior, $p(\theta \vert x)$.
This might be expected to occur when we have a large number
of independent data points informing each parameter---i.e.,
when a central limit theorem applies and the priors do not
affect the posterior.
In the full information likelihood, some
terms may be viewed as missing data, whereas in the Bayesian
model the same terms may be viewed as latent parameters,
but this does not prevent us from formally comparing the two methods.

We can draw a term-by-term analogy with
the equations in \citep{meng:1991:using}. We denote variables
from the SEM paper with a superscript ``$SEM$'' to avoid confusion.
MFVB does not differentiate between missing
data and parameters to be estimated, so our $\theta$ corresponds to
$(\theta^{SEM}, Y_{mis}^{SEM})$ in \citep{meng:1991:using}.
SEM is an asymptotic
theory, so we may assume that $(\theta^{SEM}, Y_{mis}^{SEM})$ have a
multivariate normal
distribution, and that we are interested in the mean and covariance of
$\theta^{SEM}$.

In the E-step of \citep{meng:1991:using}, we replace $Y_{mis}^{SEM}$ with
its conditional expectation given the data and other $\theta^{SEM}$.
This corresponds precisely to \eq{mvn_stable_point}, taking
$\theta_j = Y_{mis}^{SEM}$.  In the M-step, we find the maximum
of the log likelihood with respect to $\theta^{SEM}$, keeping
$Y_{mis}^{SEM}$ fixed at its expectation.  Since the mode
of a multivariate normal distribution is also its mean,
this, too, corresponds to \eq{mvn_stable_point}, now taking
$\theta_j = \theta^{SEM}$.

It follows that the MFVB and EM fixed point equations are the same;
i.e., our $M$ is the same as their $M^{SEM}$, and
our $\partial M / \partial m$ of \eq{dM_dt} corresponds
to the transpose of their $DM^{SEM}$, defined in \eqw{2.2.1}
of \citep{meng:1991:using}.  Since the ``complete information'' corresponds to
the variance of $\theta^{SEM}$ with fixed values for $Y_{OBS}^{SEM}$,
this is the same as our $\Sigma_{q^*,11}$, the variational covariance,
whose inverse is $I_{oc}^{-1}$.  Taken all together, this means that
equation (2.4.6) of \citep{meng:1991:using} can be
re-written as our \eq{spec_lrvb}.
\begin{align*}
V^{SEM} =& I_{oc}^{-1} \left(I - DM^{SEM}\right)^{-1} \Rightarrow\\
\Sigma =& \vbcov \left(I - \left(\frac{\partial M}{\partial m^T}\right)^T \right)^{-1}
       = \left(I - \frac{\partial M}{\partial m^T} \right)^{-1} \vbcov
\end{align*}

\section{Normal-Poisson details} \label{app:np_details}

In this section, we use this model to provide a detailed, step-by-step description of
a simple LRVB analysis.

The full joint distribution for the model in \eq{pn_model} is
\begin{align*}
\log p\left(y,z,\beta,\tau\right) &= \sum_{n=1}^{N}\left(-\frac{1}{2}\tau z_{n}^{2}+x_{n}\tau\beta z_{n}-\frac{1}{2}x_{n}^{2}\tau\beta^{2}-\frac{1}{2}\log\tau\right)\\
 &+\sum_{n=1}^{N}\left(-\exp\left(z_{n}\right)+z_{n}y_{n}\right)
 -\frac{1}{2\sigma_{\beta}^{2}}\beta^{2}+\left(\alpha_{\tau}-1\right)\log\tau-\beta_{\tau}\tau+\constant
\end{align*}
We find a mean-field approximation under the factorization
$q\left(\beta,\tau,z\right) =
q\left(\beta\right)q\left(\tau\right)\prod_{n=1}^{N}q\left(z_{n}\right)$. By
inspection, the log joint is quadratic in $\beta$, so the optimal
$q\left(\beta\right)$ will be Gaussian \citep{bishop:2006:pattern}. Similarly, the
log joint is a function of $\tau$ only via $\tau$ and $\log\tau$, so the optimal
$q\left(\tau\right)$ will be gamma. However, the joint does not take a standard
exponential family form in $z_n$:
$$
  \log p\left(z_{n}\vert y,\beta,\tau\right) = \left(x_{n}\tau\beta+y_{n}\right)z_{n}-\frac{1}{2}\tau z_{n}^{2}-\exp\left(z_{n}\right)+\constant
$$
The difficulty is with the term $\exp\left(z_{n}\right)$. So we make the further
restriction that
$$
  q\left(z_{n}\right) = \gauss\left(\cdot\right)=q\left(z_{n};\mbe\left[z_{n}\right],\mbe\left[z_{n}^{2}\right]\right).
$$
Fortunately, the troublesome term has an analytic expectation, as
a function of the mean parameters, under this variational posterior:
$$
  \mbeq\left[\exp\left(z_{n}\right)\right] = \exp\left(\mbeq\left[z_{n}\right]+\frac{1}{2}\left(\mbeq\left[z_{n}^{2}\right]-\mbeq\left[z_{n}\right]^{2}\right)\right).
$$
We can now write the variational distribution in terms of the following
mean parameters:
$$
  m = \left(\mbeq\left[\beta\right],\mbeq\left[\beta^{2}\right],\mbeq\left[\tau\right],\mbeq\left[\log\tau\right],\mbeq\left[z_{1}\right],\mbeq\left[z_{1}^{2}\right],...,\mbeq\left[z_{N}\right],\mbeq\left[z_{N}^{2}\right]\right)^{T}.
$$
Calculating the LRVB covariance consists of roughly four steps:

\begin{enumerate}
\item finding the MFVB optimum $q^{*}$,
\item computing the covariance $\vbcov$ of $q^*$,
\item computing $H$, the Hessian of $L(m)$, for $q^*$, and
\item computing the matrix inverse and solving $\left(I-VH\right)^{-1}V$.
\end{enumerate}

For step (1), the LRVB correction is agnostic as to how the optimum
is found. In our experiments below, we
follow a standard
coordinate ascent procedure for MFVB \citep{bishop:2006:pattern}. We analytically update
$q\left(\beta\right)$ and $q\left(\tau\right)$.
Given $q\left(\beta\right)$ and $q\left(\tau\right)$, finding the
optimal $q\left(z\right)$ becomes $N$ separate two-dimensional optimization
problems; there is one dimension for each of the mean parameters $\mbeq \left[ z_n \right]$ and $\mbeq \left[ z_n^2 \right]$.
In our examples, we solved these problems sequentially using
IPOPT \citep{ipopt:package}.

To compute $\vbcov$ for step (2),
we note that by the mean-field assumption,
$\beta$, $\tau$, and $z_{n}$ are independent, so $\vbcov$ is block
diagonal. Since we have chosen convenient variational distributions,
the mean parameters have known covariance matrices. For example, from
standard properties of the normal distribution,
$\textrm{Cov}\left(\beta,\beta^{2}\right)=2\mbeq\left[\beta\right]$$\left(\mbeq\left[\beta^{2}\right]-\mbeq\left[\beta\right]^{2}\right)$.

For step (3), the mean parameters for $\beta$ and $\tau$ co-occur with each other
and with all the $z_{n}$, so these four rows of $H$ are expected
to be dense. However, the mean parameters for $z_{n}$ never occur
with each other, so the bulk of $H$---the $2N\times2N$ block corresponding
to the mean parameters of $z$---will be block diagonal (\fig{H_sparse}).
The Hessian of $L\left(m\right)$ can be calculated analytically,
but we used the autodifferentiation software \texttt{JuMP} \citep{JuMP:LubinDunningIJOC}.

Finally, for step (4),
we use the technique in \mysec{scaling_formulas}
to exploit the sparsity of $\vbcov$ and $H$ (\fig{IVH_sparse}) in calculating $(I-VH)^{-1}$.

\begin{figure}[ht!]
  \centering
  \begin{subfigure}{0.3\linewidth}
    \centering
    \includegraphics[height=0.3 \linewidth]{./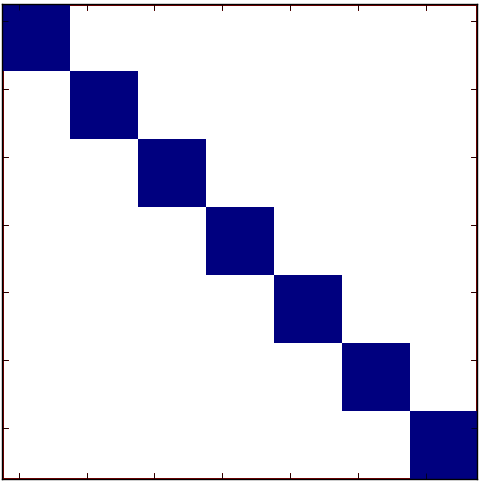}
    \caption{\label{fig:V_sparse} MFVB covariance $V$}
  \end{subfigure}
  \begin{subfigure}{0.3\linewidth}
    \centering
    \includegraphics[height=0.3 \linewidth]{./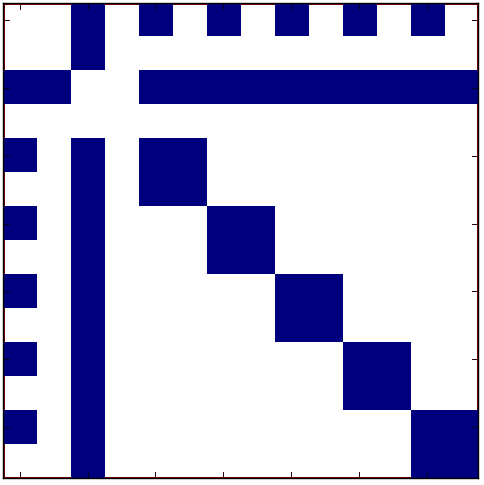}
    \caption{\label{fig:H_sparse} Hessian matrix $H$}
  \end{subfigure}
  \begin{subfigure}{0.3\linewidth}
    \centering
    \includegraphics[height=0.3 \linewidth]{./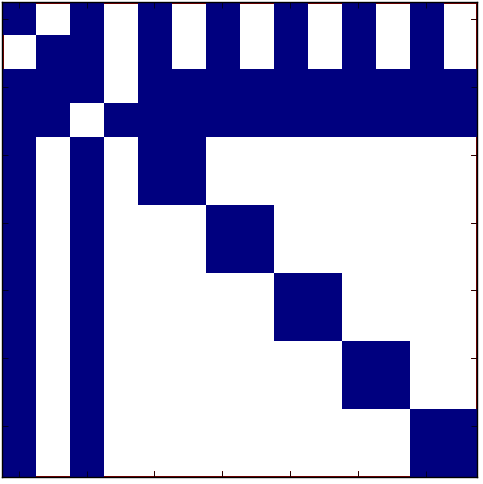}
    \caption{\label{fig:IVH_sparse} \mbox{$(I - VH)$}}
  \end{subfigure}
  \caption{Sparsity patterns for $\hat\truecov = (I - VH)^{-1}$ using the model in \eq{pn_model}, $n = 5$ (white = 0)}
  \label{fig:sparsity_patterns}
\end{figure}

\section{Random effects model details} \label{app:re_details}

As introduced in \mysec{random_effects_model}, our model is:
\begin{eqnarray*}
y_{n}\vert\beta,z,\tau & \indep & \gauss\left(\beta^{T}x_{n}+r_{n}z_{k\left(n\right)},\tau^{-1}\right)\\
z_{k}\vert\nu & \iid & \gauss\left(0,\nu^{-1}\right)
\end{eqnarray*}
With the priors:
\begin{eqnarray*}
\beta & \sim & \gauss\left(0,\Sigma_{\beta}\right)\\
\nu & \sim & \Gamma\left(\alpha_{\nu},\beta_{\nu}\right)\\
\tau & \sim & \Gamma\left(\alpha_{\tau},\beta_{\tau}\right)
\end{eqnarray*}
We will make the following mean field assumption:
\begin{eqnarray*}
q\left(\beta,z,\tau,\nu\right) & = & q\left(\nu\right)q\left(\tau\right)q\left(\beta\right)\prod_{k=1}^{K}q\left(z_{k}\right)
\end{eqnarray*}
We have $n\in\left\{ 1,...,N\right\} $, and $k\in\left\{ 1,...,K\right\} $,
and $k\left(n\right)$ matches an observation $n$ to a random effect
$k$, allowing repeated observations of a random effect. The full
joint log likelihood is:
\begin{eqnarray*}
\log p\left(y_{n}\vert z_{k\left(n\right)},\tau,\beta\right) & = & -\frac{\tau}{2}\left(y_{n}-\beta^{T}x_{n}-r_{n}z_{k\left(n\right)}\right)^{2}+\frac{1}{2}\log\tau+\constant\\
\log p\left(z_{k}\vert\nu\right) & = & -\frac{\nu}{2}z_{k}^{2}+\frac{1}{2}\log\nu+\constant\\
\log p\left(\beta\right) &  & -\frac{1}{2}\textrm{trace}\left(\Sigma_{\beta}^{-1}\beta\beta^{T}\right)+\constant\\
\log p\left(\tau\right) & = & \left(\alpha_{\tau}-1\right)\log\tau-\beta_{\tau}\tau+\constant\\
\log p\left(\nu\right) & = & \left(\alpha_{\nu}-1\right)\log\nu-\beta_{\nu}\nu+\constant\\
\log p\left(y,\tau,\beta,z\right) & = & \sum_{n=1}^{N}\log p\left(y_{n}\vert z_{k\left(n\right)},\tau,\beta\right)+\sum_{k=1}^{K}\log p\left(z_{k}\vert\nu\right)+\\
 &  & \log p\left(\beta\right)+\log p\left(\nu\right)+\log p\left(\tau\right)
\end{eqnarray*}
Expanding the first term of the conditional likelihood of $y_{n}$
gives
\begin{eqnarray*}
\lefteqn{-\frac{\tau}{2}\left(y_{n}-\beta^{T}x_{n}-r_{n}z_{k\left(n\right)}\right)^{2}}\\
& = & -\frac{\tau}{2}\left(y_{n}^{2}-2y_{n}x_{n}^{T}\beta-2y_{n}r_{n}z_{n\left(k\right)}+\textrm{trace}\left(x_{n}x_{n}^{T}\beta\beta^{T}\right)+r_{n}^{2}z_{k\left(n\right)}^{2}+2r_{n}x_{n}^{T}\beta z_{k\left(n\right)}\right)
\end{eqnarray*}
By grouping terms, we can see that the mean parameters will be
\begin{eqnarray*}
q\left(\beta\right) & = & q\left(\beta;\mbeq\left[\beta\right],\mbeq\left[\beta\beta^{T}\right]\right)\\
q\left(z_{k}\right) & = & q\left(z_{k};\mbeq\left[z_{k}\right],\mbeq\left[z_{k}^{2}\right]\right)\\
q\left(\tau\right) & = & q\left(\tau;\mbeq\left[\tau\right],\mbeq\left[\log\tau\right]\right)\\
q\left(\nu\right) & = & q\left(\nu;\mbeq\left[\nu\right],\mbeq\left[\log\nu\right]\right)
\end{eqnarray*}
It follows that the optimal variational distributions are $q\left(\beta\right)=$multivariate
normal, $q\left(z_{k}\right)=$univariate normal, and $q\left(\tau\right)$
and $q\left(\nu\right)$ will be gamma. We performed standard coordinate
ascent on these distributions \citep{bishop:2006:pattern}.

As in \mysec{normal_poisson_model}, we implemented this model in the
autodifferentiation software JuMP \citep{JuMP:LubinDunningIJOC}.
This means conjugate coordinate updates were easy, since the natural
parameters corresponding to a mean parameters are the first derivatives
of the log likelihood with respect to the mean parameters. For example,
denoting the log likelihood at step $s$ by $L_{s}$, the update for
$q_{s+1}\left(z_{k}\right)$ will be:
\begin{eqnarray*}
\log q_{s+1}\left(z_{k}\right) & = & \frac{\partial\mbeq\left[L_{s}\right]}{\partial\mbeq\left[z_{k}\right]}z_{k}+\frac{\partial\mbeq\left[L_{s}\right]}{\partial\mbeq\left[z_{k}^{2}\right]}z_{k}^{2}+\constant
\end{eqnarray*}
Given the partial derivatives of $L_{s}$ with respect to the mean
parameters, the updated mean parameters for $z_{k}$ can be read off
directly using standard properties of the normal distribution.

The variational covariance matrices are all standard. We can see that
$H$ will have nonzero terms in general (for example, the three-way
interaction $\mbeq\left[\tau\right]\mbeq\left[z_{k\left(n\right)}\right]\mbeq\left[\beta\right]$),
and that LRVB will be different from MFVB. As usual in our models,
$H$ is sparse, and we can easily apply the technique in section \mysec{scaling_formulas}
to get the covariance matrix excluding the random effects, $z$.

\section{Multivariate normal mixture details} \label{app:mvn_details}

In this section we derive the basic formulas needed to calculate \eq{spec_lrvb}
for a finite mixture of normals, which
is the model used in \mysec{experiments}.  We will
follow the notation introduced in \mysec{normal_mixture_model}.

Let each observation, $x_{n}$, be a $P\times1$ vector. We will denote
the $P$th component of the $n$th observation $x_{n}$, with a
similar pattern for $z$ and $\mu$. We will denote the $p$, $q$th
entry in the matrix $\Lambda_{k}$ as $\Lambda_{k,pq}$. The data
generating process is as follows:
\begin{eqnarray*}
P\left(x | \mu, \pi, \Lambda \right) &=&
  \prod_{n=1}^N P\left(x_{n}|z_{n},\mu,\Lambda \right)
  \prod_{k=1}^{K} P\left(z_{nk}|\pi_{k} \right)\\
\log P\left(x_{n}|z_{n},\mu,\Lambda\right) & = &
    \sum_{n=1}^{N}z_{nk}\log\phi_{k}(x_{n}) + \constant\\
\log\phi_{k}(x) & = & -\frac{1}{2}\left(x - \mu_{k}\right)^{T} \Lambda_{k}\left(x-\mu_{k}\right) +
    \frac{1}{2}\log\left|\Lambda_{k}\right|+ \constant\\
\log P(z_{nk}|\pi_{k}) & = & \sum_{k=1}^{K}z_{nk}\log\pi_{k} + \constant
\end{eqnarray*}
It follows that the log posterior is given by
\begin{eqnarray*}
\log P(z,\mu,\pi,\Lambda | x) & = & \sum_{n=1}^{N}\sum_{k=1}^{K}z_{nk}\left(\log\pi_{k}-\frac{1}{2}\left(x_{n}-\mu_{k}\right)^{T}\Lambda_{k}\left(x_{n}-\mu_{k}\right)+\frac{1}{2}\log\left|\Lambda_{k}\right|\right) + \\
  &  & \sum_{k=1}^{K} \log p(\mu_{k}) + \sum_{k=1}^{K} \log p(\Lambda_{k}) +
        \log p(\pi) + \constant
\end{eqnarray*}
We used a multivariate normal prior for $\mu_{k}$, a Wishart prior for
$\Lambda_{k}$, and a Dirichlet prior for $\pi$.  In the simulations described
in \mysec{normal_mixture_model}, we used the following prior parameters
for the VB model:
\begin{eqnarray*}
  p(\mu_{k}) &=& \mathcal{N}\left(0_P, \textrm{diag}_P(0.01)^{-1}\right) \\
  p(\Lambda_{k}) &=& \textrm{Wishart}(\textrm{diag}_P(0.01), 1)\\
  p(\pi) &=& \textrm{Dirichlet}(5_K)
\end{eqnarray*}
Here, $\textrm{diag}_P(a)$ is a $P$-dimensional diagonal matrix with $a$ on the
diagonal, and $0_P$ is a length $P$ vector of the value $0$, with a similar
definition for $5_K$. Unfortunately, the function we used for the MCMC
calculations, \texttt{rnmixGibbs} in the package \texttt{bayesm}, uses a
different form for the $\mu_{k}$ prior. Specifically, \texttt{rnmixGibbs} uses
the prior
$$
  p_{MCMC}\left(\mu_{k} \right \vert \Lambda_{k}) =
    \mathcal{N}(0, a^{-1} \Lambda_{k}^{-1})
$$
where $a$ is a scalar.  There is no way to exactly match
$p_{MCMC}(\mu_k)$ to $p(\mu_k)$, so we simply set $a=0.01$.
Since our datasets are all reasonably large, the prior was dominated by the
likelihood, and we found the results extremely insensitive to the prior
on $\mu_{k}$, so this discrepancy is of no practical importance.

The parameters $\mu_{k}$, $\Lambda_{k}$, $\pi$, and $z_{n}$ will
each be given their own variational distribution.  For $q_{\mu_k}$ we will
use a multivariate normal distribution; for $q_{\Lambda_{k}}$ we will
us a Wishart distirbution; for $q_{\pi}$ we will use a Dirichlet distribution;
for $q_{z_{n}}$ we will use a Multinoulli (a single multinomial draw).  These
are all the optimal variational choices given the mean field assumption and
the conditional conjugacy in the model.

The sufficient statistics for $\mu_{k}$ are all terms of the form
$\mu_{kp}$ and $\mu_{kp}\mu_{kq}$. Consequently, the sub-vector
of $\theta$ corresponding to $\mu_{k}$ is
\begin{eqnarray*}
\theta_{\mu_{k}} & = & \left(\begin{array}{c}
\mu_{k1}\\
\vdots\\
\mu_{kp}\\
\mu_{k1}\mu_{k1}\\
\mu_{k1}\mu_{k2}\\
\vdots\\
\mu_{kP}\mu_{kP}
\end{array}\right)
\end{eqnarray*}
We will only save one copy of $\mu_{kp}\mu_{kq}$ and $\mu_{kq}\mu_{kp}$,
so $\theta_{\mu_{k}}$ has length $P+\frac{1}{2}\left(P+1\right)P$.
For all the parameters, we denote the complete stacked vector without
a $k$ subscript:
\begin{eqnarray*}
\theta_{\mu} & = & \left(\begin{array}{c}
\theta_{\mu_{1}}\\
\vdots\\
\theta_{\mu_{K}}
\end{array}\right)
\end{eqnarray*}
The sufficient statistics for $\Lambda_{k}$ are all the terms $\Lambda_{k,pq}$
and the term $\log\left|\Lambda_{k}\right|$. Again, since $\Lambda$ is
symmetric, we do not keep redundant terms, so $\theta_{\Lambda_{k}}$ has length
$1+\frac{1}{2}\left(P+1\right)P$. The sufficient statistic for $\pi$ is the
$K$-vector $\left(\log\pi_{1},...,\log\pi_{K}\right)$. The sufficient statistics
for $z$ are simply the $N\times K$ values $z_{nk}$ themselves.

In terms of \mysec{scaling_formulas}, we have
\begin{eqnarray*}
\alpha & = & \left(\begin{array}{c}
\theta_{\mu}\\
\theta_{\Lambda}\\
\theta_{\pi}
\end{array}\right)\\
z & = & \left(\begin{array}{c}
\theta_{z}\end{array}\right)
\end{eqnarray*}
That is, we are primarily interested in the covariance of the sufficient
statistics of $\mu$, $\Lambda$, and $\pi$.  The latent variables $z$ are
nuisance parameters.

To put the log likelihood in terms useful for LRVB, we must express
it in terms of the sufficient statistics, taking into account the
fact the $\theta$ vector does not store redundant terms (e.g. it
will only keep $\Lambda_{ab}$ for $a<b$ since $\Lambda$ is symmetric).
\begin{eqnarray*}
  \lefteqn{-\frac{1}{2}\left(x_{n}-\mu_{k}\right)^{T}\Lambda_{k}\left(x_{n}-\mu_{k}\right)} \\
 & = & -\frac{1}{2}\textrm{trace}\left(\Lambda_{k}\left(x_{n}-\mu_{k}\right)\left(x_{n}-\mu_{k}\right)^{T}\right)\\
 & = & -\frac{1}{2}\sum_{a}\sum_{b}\left(\Lambda_{k,ab}\left(x_{n,a}-\mu_{k,a}\right)\left(x_{n,b}-\mu_{k,b}\right)\right)\\
 & = & -\frac{1}{2}\sum_{a}\sum_{b}\left(\Lambda_{k,ab}\mu_{k,a}\mu_{k,b}-\Lambda_{k,ab}x_{n,a}\mu_{k,b}-
          \Lambda_{k,ab}x_{n,b}\mu_{k,a}+\Lambda_{k,ab}x_{n,a}x_{n,b}\right)\\
 & = & -\frac{1}{2}\sum_{a}\Lambda_{k,aa}\left(\mu_{k}^{2}\right)^{a}+\sum_{a}\Lambda_{k,aa}x_{n,a}\mu_{k,a}-\frac{1}{2}\sum_{a}\Lambda_{k,aa}\left(x_{n}^{2}\right)^{2}-\\
 &  & \frac{1}{2}\sum_{a\ne b}\Lambda_{k,ab}\mu_{k,a}\mu_{k,b}+\sum_{a\ne b}\Lambda_{k,ab}x_{n,a}\mu_{k,b}-\frac{1}{2}\sum_{a\ne b}\Lambda_{k,ab}x_{n,a}x_{n,b}\\
 & = & -\frac{1}{2}\sum_{a}\Lambda_{k,aa}\left(\mu_{k}^{2}\right)^{a}+\sum_{a}\Lambda_{k,aa}x_{n,a}\mu_{k,a}-\frac{1}{2}\sum_{a}\Lambda_{k,aa}\left(x_{n}^{2}\right)^{2}-\\
 &  & \sum_{a<b}\Lambda_{k,ab}\mu_{k,a}\mu_{k,b}+\sum_{a<b}\Lambda_{k,ab}\left(x_{n,a}\mu_{k,b}+x_{n,b}\mu_{k,a}\right)-\sum_{a<b}\Lambda_{k,ab}x_{n,a}x_{n,b}
\end{eqnarray*}
The MFVB updates and covariances in $V$ are all given by properties of standard
distributions. To compute the LRVB corrections, it only remains to calculate the
Hessian, $H$. These terms can be read directly off the posterior. First we
calculate derivatives with respect to components of $\mu$.
\begin{eqnarray*}
\frac{\partial^{2}H}{\partial\mu_{k,a}\partial\Lambda_{k,ab}} & = & \sum_{i}z_{nk}x_{n,b}\\
\frac{\partial^{2}H}{\partial\left(\mu_{k,a}\mu_{k,b}\right)\partial\Lambda_{k,ab}} & = & -\left(\frac{1}{2}\right)^{1(a=b)}\sum_{n}z_{nk}\\
\frac{\partial^{2}H}{\partial\mu_{k,a}\partial z_{nk}} & = & \sum_{b}\Lambda_{k,ab}x_{n,b}\\
\frac{\partial^{2}H}{\partial\left(\mu_{k,a}\mu_{k,b}\right)\partial z_{nk}} & = & -\left(\frac{1}{2}\right)^{1(a=b)}\Lambda_{k,ab}
\end{eqnarray*}
All other $\mu$ derivatives are zero. For $\Lambda$,
\begin{eqnarray*}
\frac{\partial^{2}H}{\partial\Lambda_{k,ab}\partial z_{nk}} & = & -\left(\frac{1}{2}\right)^{1(a=b)}\left(x_{n,a}x_{n,b}-\mu_{k,a}x_{n,b}-\mu_{k,b}x_{n,a}+\mu_{k,a}\mu_{k,b}\right)\\
\frac{\partial^{2}H}{\partial\log\left|\Lambda_{k}\right|\partial z_{nk}} & = & \frac{1}{2}
\end{eqnarray*}
The remaining $\Lambda$ derivatives are zero. The only nonzero second
derivatives for $\log\pi$ are to $Z$ and are given by
\begin{eqnarray*}
\frac{\partial^{2}H}{\partial\log\pi_{k}\partial z_{nk}} & = & 1
\end{eqnarray*}
Note in particular that $H_{zz} = 0$, allowing efficient calculation of
\eq{nuisance_lrvb_est}.

\section{MNIST details} \label{app:mnist_details}

For a real-world example,
we applied LRVB to the unsupervised classification of two digits
from the MNIST dataset of handwritten digits.
We first preprocess the MNIST dataset by performing principle component
analysis on the training data's centered pixel intensities
and keeping the top $\MNISTp$ components.
For evaluation, the test data is projected onto the same
$\MNISTp$-dimensional subspace found using the training data.

We then treat the problem of
separating handwritten $0$s from $1$s as an unsupervised clustering
problem.  We limit the dataset to instances labeled as $0$
or $1$, resulting in $\MNISTn$ training and $\MNISTTestN$ test points.
We fit the training data
as a mixture of multivariate Gaussians.  Here, $K=2$, $P=\MNISTp$, and
$N=\MNISTn$.  Then, keeping the $\mu$, $\Lambda$, and $\pi$
parameters fixed, we calculate the expectations of the
latent variables $z$ in \eq{normal_mixture_model} for the test set.
We assign test set data point $x_n$ to whichever component has
maximum a posteriori expectation.  We count successful classifications
as test set points that match their cluster's majority label
and errors as test set points that are different from their cluster's
majority label.  By this measure, our test set error rate was
$\MNISTTestError$. We stress that we intend only to demonstrate
the feasibility of LRVB on a large, real-world dataset rather than
to propose practical methods for modeling MNIST.

\end{document}